\def\th{\bm \theta}
\def\e{\bm{e}}
\def\ee{\bm{y}}
\def\z{\bm{z}}
\def\a{\alpha}
\def\S{\mathcal{S}}
\def\A{\mathcal{A}}
\def\L{\mathcal{L}}
\def\reals{\mathbb{R}}
\def\u{\bm u}
\def\u{\bm u}
\def\d{\delta}
\def\w{\mathbf{w}}
\def\x{\mathbf{x}}
\def\y{\mathbf{y}}
\def\E#1{\mathbb{E}\left[\, #1 \,\right]}
\def\tr{^{\top}}
\def\X{\mathbf{X}}
\def\q{\bm q}
\def\l{\lambda}
\def\g{\gamma}
\def\({\left(}
\def\){\right)}
\definecolor{blue_colour}{rgb}{0., 0.3, 0.8}
\definecolor{orange_colour}{rgb}{0.85, 0.4, 0.}
\def\blue#1{\textcolor{blue_colour}{\textbf{#1}}}
\def\orange#1{\textcolor{orange_colour}{\textbf{#1}}}
\newtheorem{proposition}{Proposition}
\newtheorem{lemma}{Lemma}
\newtheorem{definition}{Definition}
\newtheorem{property}{Property}
\title{Expected Eligibility Traces}
\title{Expected Eligibility Traces}
\author {
    Hado van Hasselt\textsuperscript{\rm 1}
    ,
    Sephora Madjiheurem\textsuperscript{\rm 2}
    ,
    Matteo Hessel\textsuperscript{\rm 1}
    \\
    David Silver\textsuperscript{\rm 1}
    ,
    Andr{\'e} Barreto\textsuperscript{\rm 1}
    ,
    Diana Borsa\textsuperscript{\rm 1}\\
}
\begin{document}

\maketitle

\begin{abstract}
The question of how to determine which states and actions are responsible for a certain outcome is known as the \emph{credit assignment problem} and remains a central research question in reinforcement learning and artificial intelligence. 
\emph{Eligibility traces} enable efficient credit assignment to the recent sequence of states and actions experienced by the agent, but not to counterfactual sequences that could also have led to the current state.
In this work, we introduce \emph{expected eligibility traces}. Expected traces allow, with a single update, to update states and actions that could have preceded the current state, even if they did not do so on this occasion.
We discuss when expected traces provide benefits over classic (instantaneous) traces in temporal-difference learning, and show that sometimes substantial improvements can be attained. We provide a way to smoothly interpolate between instantaneous and expected traces by a mechanism similar to bootstrapping, which ensures that the resulting algorithm is a strict generalisation of TD($\lambda$). Finally, we discuss possible extensions and connections to related ideas, such as successor features.
\end{abstract}

\label{intro}

Appropriate credit assignment has long been a major research topic in artificial intelligence \citep{Minsky:63}.  To make effective decisions and understand the world, we need to accurately associate events, like rewards or penalties, to relevant earlier decisions or situations. This is important both for learning accurate predictions, and for making good decisions.

\emph{Temporal credit assignment} can be achieved with repeated temporal-difference (TD) updates \citep{Sutton:1988}.  One-step TD updates propagate information slowly: when a surprising value is observed, the state immediately preceding it is updated, but no earlier states or decisions are updated.
\emph{Multi-step} updates \citep{Sutton:1988,SuttonBarto:2018} propagate information faster over longer temporal spans, speeding up credit assignment and learning. Multi-step updates can be implemented online using \emph{eligibility traces} \citep{Sutton:1988}, without incurring significant additional computational expense, even if the time spans are long; these algorithms have computation that is independent of the temporal span of the prediction \citep{vanHasselt:2015}.

Traces provide temporal credit assignment, but do not assign credit \emph{counterfactually} to states or actions that \emph{could} have led to the current state, but did not do so this time. 
Credit will eventually trickle backwards over the course of multiple visits, but this can take many iterations.
As an example, suppose we collect a key to open a door, which leads to an unexpected reward.  Using standard one-step TD learning, we would update the state in which the door opened. Using eligibility traces, we would also update the preceding trajectory, including the acquisition of the key. But we would not update other sequences that \emph{could} have led to the reward, such as collecting a spare key or finding a different entrance.

\begin{figure}[t]
\centering
\includegraphics[width=1.\linewidth]{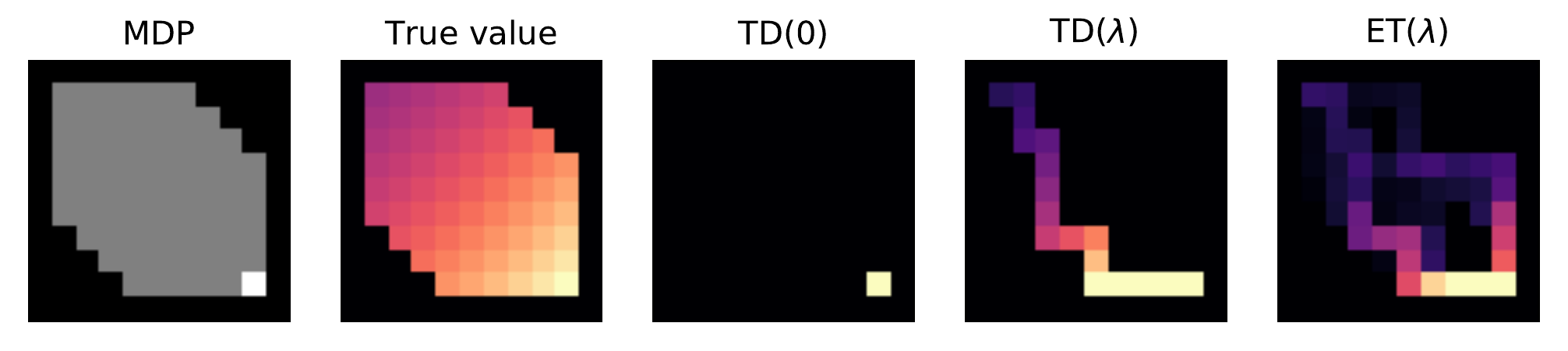}
\caption{\label{illustration1} A comparison of TD(0), TD($\lambda$), and the new expected-trace algorithm ET($\lambda$) (with $\lambda=0.9$). The MDP is illustrated on the left.  Each episode, the agent moves randomly down and right from the top left to the bottom right, where any action terminates the episode. Reward on termination are $+1$ with probability 0.2, and zero otherwise---all other rewards are zero. We plot the value estimates after the first positive reward, which occurred in episode 7. We see a) TD(0) only updated the last state, b) TD($\lambda$) updated the trajectory in this episode, and c) ET($\lambda$) additionally updated trajectories from earlier (unrewarding) episodes.}
\end{figure}

The problem of credit assignment to counterfactual states may be addressed by learning a model, and using the model to propagate credit \cite{Sutton:1990,Moore:93}; however, it has often proven challenging to construct and use models effectively in complex environments \citep[cf.][]{vanHasselt:2019}. Similarly, \emph{source traces} \citep{Pitis18} model full potential histories in tabular settings, but rely on estimated importance-sampling ratios of state distributions, which are hard to estimate in non-tabular settings.

We introduce a new approach to counterfactual credit assignment, based on the concept of \emph{expected eligibility traces}. We present a family of algorithms, which we call ET($\lambda$), that use expected traces to update their predictions. We analyse the nature of these expected traces, and illustrate their benefits empirically in several settings---see Figure \ref{illustration1} for a first illustration. We introduce a bootstrapping mechanism that provides a spectrum  of algorithms between standard eligibility traces and expected eligibility traces, and also discuss ways to apply these ideas with deep neural networks. Finally, we discuss possible extensions and connections to related ideas such as successor features.

\section{Background}
\label{background}
Sequential decision problems can be modelled as Markov decision processes\footnote{The ideas extend naturally to POMDPs \citep[cf.][]{Kaelbling:95}.} (MDP) $(\S, \A, p)$ \citep{Puterman:1994}, with state space $\S$, action space $\A$, and a joint transition and reward distribution $p(r, s' | s, a)$. An agent selects actions according to its policy $\pi$, such that $A_t \sim \pi(\cdot|S_t)$ where $\pi(a|s)$ denotes the probability of selecting $a$ in $s$, and observes random rewards and states generated according to the MDP, resulting in trajectories $\tau_{t:T} = \{S_t, A_t, R_{t+1}, S_{t+1}, \ldots, S_T\}$.  A central goal is to predict \emph{returns} of future discounted rewards \citep{SuttonBarto:2018}
\begin{align*}
G_t \equiv G(\tau_{t:T}) &= R_{t+1} + \g_{t+1} R_{t+2} + \g_{t+1} \g_{t+2}R_{t+3} + \ldots \\
&= \sum_{i=1}^{T} \g^{(i-1)}_{t+i} R_{t+i}\,,
\end{align*}
where $T$ is for instance the time the current episode terminates or $T=\infty$, and where $\g_t \in [0, 1]$ is a (possibly constant) discount factor and $\g^{(i)}_t = \prod_{k=1}^i \g_{t+k}$. The value $v_\pi(s) = \E{ G_t | S_t=s, \pi }$ of state $s$ is the expected return for a policy $\pi$.  Rather than writing the return as a random variable $G_t$, it will be convenient to instead write it as an explicit function $G(\tau)$ of the random trajectory $\tau$. Note that $G(\tau_{t:T}) = R_{t+1} + \g_{t+1} G(\tau_{t+1:T})$.

We approximate the value with a function $v_{\w}(s) \approx v_{\pi}(s)$.  This can for instance be a table---with a single separate entry $w[s]$ for each state---a linear function of some input features, or a non-linear function such as a neural network with parameters $\w$. The goal is to iteratively update $\w$ with
\[
\w_{t+1} = \w_t + \Delta\w_t
\]
such that $v_{\w}$ approaches the true $v_{\pi}$. Perhaps the simplest algorithm to do so is the Monte Carlo (MC) algorithm
\[
\Delta\w_t \equiv \alpha ( R_{t+1} + \gamma_{t+1} G(\tau_{t+1:T}) - v_{\w}(S_t) ) \nabla_{\w} v_{\w}(S_t) \,.
\]
Monte Carlo is effective, but has high variance, which can lead to slow learning.  TD learning \citep{Sutton:1988,SuttonBarto:2018} instead replaces the return with the current estimate of its expectation $v(S_{t+1}) \approx G(\tau_{t+1:T})$, yielding
\begin{align}\label{eq:td}
\Delta\w_t & \equiv  \alpha \delta_t \nabla_{\w} v_{\w}(S_t) \,,\\
\text{where }~~
\delta_t & \equiv R_{t+1} + \gamma_{t+1} v_{\w}(S_{t+1}) - v_{\w}(S_t) \,,\notag
\end{align}
where $\d_t$ is called the temporal-difference (TD) error.
We can interpolate between these extremes, for instance with $\lambda$-returns which smoothly mix values and sampled returns:
\[
G^\l(\tau_{t:T}) = R_{t+1} + \g_{t+1} \big((1 - \l) v_\w(S_{t+1}) + \l G^\l(\tau_{t+1:T})\big) \,.
\]
`Forward view' algorithms, like the MC algorithm, use returns that depend on future trajectories and need to wait until the end of an episode to construct their updates, which can take a long time.  Conversely, `backward view' algorithms rely only on past experiences and can update their predictions online, during an episode.  Such algorithms build an \emph{eligibility trace} \citep{Sutton:1988,SuttonBarto:2018}. An example is TD($\lambda$):
\begin{align*}
\Delta\w_t & \equiv \alpha \d_t \e_t \,,
& \text{with} &&
\e_t & = \g_t \l \e_{t-1} + \nabla_{\w} v_{\w}(S_t) \,,
\end{align*}
where $\e_t$ is an accumulating eligibility trace. This trace can be viewed as a function $\e_t\equiv \e(\tau_{0:t})$ of the trajectory of past transitions. The TD update in \eqref{eq:td} is known as TD(0), because it corresponds to using $\lambda=0$. TD($\lambda=1$) corresponds to an online implementation of the MC algorithm. Other variants exist, using other kinds of traces, and equivalences have been shown between these algorithms and their forward view using $\lambda$-returns: these backward-view algorithms converge to the same solution as the corresponding forward view, and can in some cases yield equivalent weight updates \citep{Sutton:1988,vanSeijen:2014,vanHasselt:2015}.

\section{Expected traces}
\label{algo}
The main idea is to use the concept of an \emph{expected eligibility trace}, defined as
\[
\z(s) \equiv \E{ \e_t \mid S_t=s } \,,
\]
where the expectation is over the agent's policy and the MDP dynamics.
We introduce a concrete family of algorithms, which we call ET($\lambda$) and ET($\lambda$, $\eta$), that learn expected traces and use them in value updates. We analyse these algorithms theoretically, describe specific instances, and discuss computational and algorithmic properties.

\subsection{ET($\lambda$)}
We propose to learn approximations $\z_{\th}(S_t) \approx \z(S_t)$, with parameters $\th \in \mathbb{R}^d$ (e.g., the weights of a neural network).  One way to learn $\z_{\th}$ is by updating it toward the instantaneous trace $\e_t$, by minimizing an empirical loss $\L(\e_t, \z_{\th}(S_t))$.
For instance, $\L$ could be a component-wise squared loss, optimized with stochastic gradient descent:
\begin{align*}
\th_{t+1} & = \th_t + \Delta\th_t \,,\text{ where}\\
\Delta\th_t
& = - \beta \frac{\partial}{\partial \th} \frac{1}{2} (\e_t - \z_{\th}(S_t))\tr (\e_t - \z_{\th}(S_t)) \\
&= \beta \frac{\partial \z_{\th}(S_t)}{\partial \th} (\e_t - \z_{\th}(S_t)) \,,
\end{align*}
where $\frac{\partial z_{\th}(S_t)}{\partial \th}$ is a $|\th| \times |\e|$ Jacobian\footnote{Auto-differentiation can efficiently compute this update with comparable computation to the loss calculation.} and $\beta$ is a step size.

\begin{figure}
\centering
\vspace{-0.8cm}
\begin{minipage}[t]{0.44\textwidth}
\begin{algorithm}[H]
    \caption{ET($\lambda$)}\label{tdlambda_exp}
    \begin{algorithmic}[1]
        \State \text{initialise $\w$, $\th$}
        \For{$M$ episodes}
        \State \text{initialise} $\e = \bm 0$
        \State \text{observe initial state $S$} 
        \Repeat{ for each step in episode $m$}
        \State \text{generate $R$ and $S'$} 
        \State $\d \gets R + \g v_{\w}(S') - v_{\w}(S)$ 
        \State $\e \gets \gamma \lambda \e + \nabla_{\w} v_{\w}(S)$ 
        \State $\th \gets \th + \beta \frac{\partial \z_{\th}(S)}{\partial \th} (\e - \z_{\th}(S))$
        \State $\w \gets \w + \a \delta \z_{\th}(S)$
        \Until{ $S$ \text{is terminal}}
        \EndFor
        \State \textbf{Return}  $\w$
    \end{algorithmic}
\end{algorithm}
\end{minipage}
\end{figure}
The idea is then to use $\z_{\th}(s) \approx \E{ \e_t \mid S_t=s }$ in place of $\e_t$ in the value update, which becomes
\begin{align}
\label{eq:et_update}
\Delta\w_t \equiv \d_t \z_{\th}(S_t) \,.
\end{align}
We call this ET($\lambda$).
Below, we prove that this update can be unbiased and can have lower variance than TD($\lambda$). Algorithm~\ref{tdlambda_exp} shows pseudo-code for a concrete instance of ET($\lambda$).

\subsection{Interpretation and ET($\lambda, \eta$)}
We can interpret TD(0) as taking the MC update and replacing the return from the subsequent state, which is a function of the future trajectory, with a state-based estimate of its expectation: $v(S_{t+1}) \approx \E{ G(\tau_{t+1:T}) | S_{t+1} }$. This becomes most clear when juxtaposing the updates
\begin{align*}
    \Delta\w_t & \equiv \alpha ( R_{t+1} + \gamma_{t+1} G(\tau_{t+1:T}) - v_{\w}(S_t) ) \bm{\nabla}_t \,,\tag{MC}\\
    \Delta\w_t & \equiv \alpha ( R_{t+1} + \gamma_{t+1} v_{\w}(S_{t+1}) - v_{\w}(S_t) ) \bm{\nabla}_t \,,\tag{TD}
\end{align*}
where we used a shorthand $\bm{\nabla}_t \equiv \nabla_{\w} v_{\w}(S_t)$.

TD($\lambda$) also uses a function of a trajectory: the trace $\e_t$.  We propose replacing this as well with a function state $\z_{\th}(S_t) \approx \E{ \e(\tau_{0:t}) | S_t }$: the expected trace. Again juxtaposing:
\begin{align*}
    \Delta\w_t & \equiv \alpha \d_t \e(\tau_{0:t}) \,,\tag{TD($\lambda$)}\\
    \Delta\w_t & \equiv \alpha \d_t \z_{\th}(S_t) \,.\tag{ET($\lambda$)}
\end{align*}

When switching from MC to TD(0), the dependence on the trajectory was replaced with a state-based value estimate to bootstrap on.  We can interpolate smoothly between MC and TD(0) via $\lambda$.  This is often useful to trade off variance of the return with potential bias of the value estimate.  For instance, we might not have access to the true state $s$, and might instead have to rely on features $\x(s)$. Then we cannot always represent or learn the true values $v(s)$---for instance different states may be aliased \citep{Whitehead:1991}.

Similarly, when moving from TD($\lambda$) to ET($\lambda$) we replaced a trajectory-based trace with a state-based estimate.  This might induce bias and, again, we can smoothly interpolate by using a recursively defined mixture trace $\ee_t$, as defined as\footnote{While $\ee_t$ depends on both $\eta$ and $\lambda$ we leave this dependence implicit, as is conventional for traces.}
\begin{equation}
\label{eq:mixture}    
\ee_t = (1 - \eta) \z_{\th}(S_t) + \eta \big( \g_t \l \ee_{t-1} + \nabla_{\w} v_{\w}(S_t) \big) \,.
\end{equation}
This recursive usage of the estimates $\z_{\th}(s)$ at previous states is analogous to bootstrapping on future state values when using a $\lambda$-return, with the important difference that the arrow of time is opposite.  This means we do not first have to convert this into a backward view: the quantity can already be computed from past experience directly. We call the algorithm that uses this mixture trace ET($\lambda$, $\eta$):
\begin{align*}
    \Delta\w_t & \equiv \alpha \d_t \ee(S_t) \,.\tag{ET($\lambda$, $\eta$)}
\end{align*}
Note that if $\eta=1$ then $\ee_t=\e_t$ equals the instantaneous trace: ET($\lambda$, $1$) is equivalent to TD($\lambda$). If $\eta=0$ then $\ee_t=\z_t$ equals the expected trace; the algorithm introduced earlier as ET($\lambda$) is equivalent to ET($\lambda$, $0$). By setting $\eta \in (0, 1)$, we can smoothly interpolate between these extremes.

\section{Theoretical analysis}\label{sec:analysis}

We now analyse the new ET algorithms theoretically. First we show that if we use $\z(s)$ directly and $s$ is Markov then the update has the same expectation as TD($\lambda$) (though possibly with lower variance), and therefore also inherits the same fixed point and convergence properties.

\begin{lemma}
\label{lmi}
If $s$ is Markov, then 
\[\E{ \d_t \e_t \mid S_t=s} = \E{ \d_t \mid S_t=s } \E{ \e_t \mid S_t=s }.\] 
\end{lemma}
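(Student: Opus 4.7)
The plan is to establish conditional independence of $\delta_t$ and $\e_t$ given $S_t$, and then deduce the factorisation of the expectation from this.

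First I would identify precisely what each factor depends on. The TD error $\delta_t = R_{t+1} + \gamma_{t+1} v_{\w}(S_{t+1}) - v_{\w}(S_t)$ is, given the fixed parameters $\w$, a function of the present state $S_t$ and the one-step future $(R_{t+1}, S_{t+1})$. The accumulating trace $\e_t = \gamma_t \lambda \e_{t-1} + \nabla_{\w} v_{\w}(S_t)$ unrolls into a function $\e(\tau_{0:t})$ of the past trajectory up to and including $S_t$. So schematically, $\delta_t$ lives in the $\sigma$-algebra generated by $(S_t, R_{t+1}, S_{t+1})$ and $\e_t$ lives in the $\sigma$-algebra generated by $\tau_{0:t}$.

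Next I would invoke the Markov property of $S_t$: conditionally on $S_t$, the future $(R_{t+1}, S_{t+1}, \dots)$ is independent of the past $\tau_{0:t-1}$ (using that actions come from $\pi(\cdot \mid S_t)$ and transitions from $p(\cdot \mid S_t, A_t)$). Because $\e_t$ is determined by $\tau_{0:t-1}$ together with $S_t$, and $\delta_t$ is determined by $S_t$ together with $(R_{t+1}, S_{t+1})$, conditioning on $S_t = s$ makes $\e_t$ and $\delta_t$ conditionally independent.

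Finally I would apply the standard fact that for conditionally independent random variables $X, Y$ given $Z$, one has $\mathbb{E}[XY \mid Z] = \mathbb{E}[X \mid Z]\, \mathbb{E}[Y \mid Z]$. Taking $X = \delta_t$, $Y = \e_t$, $Z = S_t$ yields
\[
\E{\d_t \e_t \mid S_t = s} = \E{\d_t \mid S_t = s}\, \E{\e_t \mid S_t = s},
\]
which is the claim. The main obstacle is the bookkeeping step of verifying that $\e_t$ really is measurable with respect to the past (including $S_t$) and $\delta_t$ with respect to $S_t$ and the one-step future; everything else is a direct consequence of the Markov property and the product rule for conditional expectations of independent variables. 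No nontrivial computation is required.
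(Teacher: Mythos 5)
Your proposal is correct and follows essentially the same route as the paper's own proof: both argue that $\d_t$ is determined by $S_t$ and the one-step future $(R_{t+1}, S_{t+1})$ while $\e_t$ is determined by the trajectory up to time $t$, invoke the Markov property to conclude that $\d_t$ and $\e_t$ are conditionally independent given $S_t$, and then factor the conditional expectation. The paper merely makes explicit an intermediate step you take for granted, namely that the Markov property of the environment kernel carries over to the policy-induced kernel $p_\pi$ after marginalising out $A_t \sim \pi(\cdot \mid S_t)$.
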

\begin{proof} In Appendix~\ref{sec:proof_markov_ind}.
\end{proof}

\begin{restatable}{proposition}{PropMeanVar}
\label{prop:mean_and_variance}
\label{eqv}
Let $\e_t$ be any trace vector, updated in any way.  Let $\z(s) = \E{ \e_t \mid S_t=s }$.  Consider the ET($\lambda$) algorithm $\Delta\w_t = \alpha_t \d_t \z(S_t)$. For all Markov $s$ the expectation of this update is equal to the expected update with instantaneous trace $\e_t$, and the variance is lower or equal:
\begin{align*}
    \E{ \alpha_t \d_t \z(S_t) | S_t=s } & = \E{ \alpha_t \d_t \e_t | S_t=s}
    & \text{and}  \\
    \mathbb{V}[ \alpha_t \d_t \z(S_t) | S_t=s ] & \le \mathbb{V}[ \alpha_t \d_t \e_t | S_t=s ] \,,
\end{align*}
where the second inequality holds component-wise for the update vector, and is strict when $\mathbb{V}[\e_t  | S_t] > 0$.
\end{restatable}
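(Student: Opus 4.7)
My plan is to condition on $S_t=s$ from the start. Then $\a_t$ and $\z(S_t)=\z(s)$ become deterministic and factor out of the expectation: the left-hand side collapses to $\a_t\,\z(s)\,\E{\d_t \mid S_t=s}$, and by Lemma~\ref{lmi} the right-hand side equals $\a_t\,\E{\d_t \mid S_t=s}\,\E{\e_t \mid S_t=s} = \a_t\,\E{\d_t \mid S_t=s}\,\z(s)$, so the two sides coincide.

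\textbf{Variance, setup.} Lemma~\ref{lmi} is really an expression of the Markov property: conditional on $S_t=s$, the TD error $\d_t$ depends only on the one-step future from $s$, while $\e_t = \e(\tau_{0:t})$ is a function of the past trajectory, and these are conditionally independent given $S_t=s$. I will therefore strengthen the uncorrelatedness used for the mean to full conditional independence $\d_t \perp \e_t \mid S_t=s$. Fix a component index $i$ and abbreviate $D=\d_t$, $E=(\e_t)_i$. The product-of-independents variance identity yields
\[
\V[DE \mid S_t=s] \;=\; \E{D^2 \mid S_t=s}\,\V[E \mid S_t=s] \;+\; (\E{E \mid S_t=s})^2\,\V[D \mid S_t=s],
\]
whereas, since $\E{E \mid S_t=s} = (\z(s))_i$ is deterministic,
\[
\V[D\,(\z(s))_i \mid S_t=s] \;=\; (\E{E \mid S_t=s})^2\,\V[D \mid S_t=s].
\]

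\textbf{Finishing and strictness.} Subtracting the second line from the first and multiplying through by $\a_t^2$ (deterministic given $S_t$) gives
\[
\V[\a_t DE \mid S_t=s] - \V[\a_t D\,(\z(s))_i \mid S_t=s] \;=\; \a_t^2\,\E{D^2 \mid S_t=s}\,\V[E \mid S_t=s] \;\ge\; 0,
\]
which is the stated component-wise variance inequality. It is strict exactly when both $\V[(\e_t)_i \mid S_t=s] > 0$ and $\E{\d_t^2 \mid S_t=s} > 0$; the latter is the mild non-degeneracy that the TD error is not almost surely zero at $s$, so the condition $\V[\e_t \mid S_t] > 0$ in the statement does the work.

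\textbf{Expected obstacle.} The algebra is routine; the only real care is lifting Lemma~\ref{lmi} from a factorization of expectations into genuine conditional independence of $\d_t$ and $\e_t$ given $S_t$. I expect this to come for free from whatever proof of Lemma~\ref{lmi} is given in the appendix, since the Markov property supplies independence (not just zero covariance) between past and one-step future given the present state, and the factorization $\e_t = \e(\tau_{0:t})$ versus $\d_t = R_{t+1} + \g_{t+1}v_\w(S_{t+1}) - v_\w(s)$ places these two random quantities on opposite sides of $S_t$.
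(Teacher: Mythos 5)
Your proof is correct and follows essentially the same route as the paper's: the mean claim is Lemma~\ref{lmi} verbatim, and your variance computation via the product-of-independents identity is just a reorganisation of the paper's second-moment comparison (both reduce the gap to $\alpha_t^2\,\E{\d_t^2\mid S_t=s}\,\mathbb{V}[e_{t,i}\mid S_t=s]$, and both rely on full conditional independence of $\d_t$ and $\e_t$ given $S_t=s$, which the appendix proof of Lemma~\ref{lmi} does establish). Your one refinement over the paper is noting that strictness also requires $\E{\d_t^2\mid S_t=s}>0$, a non-degeneracy condition the paper's statement leaves implicit.
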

\begin{proof}
We have
\begin{align}
    & \E{ \alpha_t \d_t \e_t \mid S_t=s} \notag\\
    & = \E{ \alpha_t \d_t \mid S_t=s } \E{ \e_t \mid S_t=s } \tag{Lemma~\ref{lmi}}\\
    & = \E{ \alpha_t \d_t \mid S_t=s } \z(s) \notag\\
    & = \E{ \alpha_t \d_t \z(S_t) \mid S_t=s }\,. \label{eq_means}
\end{align}
Denote the $i$-th component of $\z(S_t)$ by $z_{t,i}$ and the $i$-th component of $\e_t$ by $e_{t,i}$. Then, we also have
\begin{multline*}
    \E{ (\alpha_t \d_t z_{t,i})^2 | S_t=s}
    = \E{ \alpha_t^2 \d_t^2 \mid S_t=s } z_{t,i}^2 \hfill\\
    = \E{ \alpha_t^2 \d_t^2 \mid S_t=s } \E{ e_{t,i} | S_t=s }^2  \hfill\\
    = \E{ \alpha_t^2 \d_t^2 \mid S_t=s } \left(\E{ e_{t,i}^2 | S_t=s } - \mathbb{V}[{ e_{t,i} | S_t=s }] \right) \hfill\\ 
    \leq \E{ \alpha_t^2 \d_t^2 \mid S_t=s } \E{e_{t,i}^2 \mid S_t=s } \hfill \\
    = \E{ (\alpha_t \d_t e_{t,i})^2 \mid S_t=s } \,, \hfill
\end{multline*}
where the last step used the fact that $s$ is Markov, and the inequality is strict when $\mathbb{V}[\e_t  | S_t] > 0$. Since the expectations are equal, as shown in \eqref{eq_means}, the conclusion follows.
\end{proof}

\paragraph{Interpretation}
Proposition \ref{eqv} is a strong result: it holds for any trace update, including accumulating traces \citep{Sutton:84,Sutton:1988}, replacing traces \citep{Singh:96}, dutch traces \citep{vanSeijen:2014,vanHasselt:2014,vanHasselt:2015}, and future traces that may be discovered. It implies convergence of ET($\lambda$) under the same conditions as TD($\lambda$) \citep{Dayan:92,Peng:93,Tsitsiklis:94} with lower variance when $\mathbb{V}[\e_t  | S_t] > 0$, which is the common case.

Next, we consider what happens if we violate the assumptions of Proposition \ref{eqv}. We start by analysing the case of a learned approximation $\z_t(s) \approx \z(s)$ that relies solely on observed experience.

\begin{restatable}{proposition}{PropRunningMean}\label{prop:sample_mean_and_variance}
Let $\e_t$ an instantaneous trace vector.  Then let $\z_{t}(s)$ be the empirical mean $\z_{t}(s)= \frac{1}{n_t(s)} \sum_{i}^{n_t(s)} \e_{t_i^s}$, where $t_i^s$-s denote past times when we have been in state s, that is $S_{t_i^s}=s$, and $n_t(s)$ is the number of visits to $s$ in the first $t$ steps.  Consider the expected trace algorithm $\w_{t+1} = \w_t + \alpha_t \d_t \z_t$. If $S_t$ is Markov, the expectation of this update is equal to the expected update with instantaneous traces $\e_t$, while attaining a potentially lower variance:
\begin{align*}
    \E{ \alpha_t \d_t \z_t(S_t) \mid S_t }
    & = \E{ \alpha_t \d_t \e_t \mid S_t} &
     \text{and} \\
    \mathbb{V}{[ \alpha_t \d_t \z_t(S_t) \mid S_t]}
    & \leq  \mathbb{V}{[\alpha_t \d_t \e_t \mid S_t]} \,,
\end{align*}
where the second inequality holds component-wise. The inequality is strict when $\mathbb{V}[\e_t \mid S_t] > 0$.
\end{restatable}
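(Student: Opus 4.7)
The plan is to adapt the proof of Proposition~\ref{eqv} to the case where $\z_t(s)$ is a random empirical average rather than the true expectation $\z(s)$. The central observation is that $\z_t(S_t)$ is still a function of the trajectory up to time $t$, so conditional on the Markov state $S_t=s$ it is independent of $\alpha_t\d_t$ by exactly the same argument as in Lemma~\ref{lmi}. Hence the factorisation $\E{\alpha_t\d_t\z_t(S_t)\mid S_t=s}=\E{\alpha_t\d_t\mid S_t=s}\,\E{\z_t(S_t)\mid S_t=s}$ still applies, and the work reduces to analysing the first two conditional moments of $\z_t(s)$ given $S_t=s$.

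For the mean, I would expand $\z_t(s)=\tfrac{1}{n_t(s)}\sum_i \e_{t_i^s}$ and apply the tower property. By the Markov property, $\e_{t_i^s}$ is a function of the past up to time $t_i^s$ and is therefore conditionally independent of $S_t$ given $S_{t_i^s}=s$, so each summand has conditional mean $\z(s)$ given $S_{t_i^s}=s$ and $S_t=s$. Consequently $\E{\z_t(s)\mid S_t=s}=\z(s)$, and running Lemma~\ref{lmi} in reverse yields the desired equality $\E{\alpha_t\d_t\z_t(S_t)\mid S_t=s}=\E{\alpha_t\d_t\e_t\mid S_t=s}$, exactly mirroring the first half of the proof of Proposition~\ref{eqv}.

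For the variance I would follow the second-moment expansion used in Proposition~\ref{eqv}: conditional independence gives, component-wise, $\E{(\alpha_t\d_t z_{t,i}(S_t))^2\mid S_t=s}=\E{\alpha_t^2\d_t^2\mid S_t=s}\,\E{z_{t,i}(S_t)^2\mid S_t=s}$. Splitting each second moment into mean-squared plus variance and using the fact that the conditional means of $z_{t,i}(S_t)$ and $e_{t,i}$ agree (by the mean calculation above), the desired component-wise inequality reduces to showing $\mathbb{V}[z_{t,i}(S_t)\mid S_t=s]\le\mathbb{V}[e_{t,i}\mid S_t=s]$, i.e.\ that averaging does not increase variance.

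The main obstacle I foresee is precisely this last step: the past trace samples $\e_{t_i^s}$ at different visits to $s$ are in general \emph{not} independent, because the intervening trajectory induces correlations between them, so the textbook $\sigma^2/n$ bound does not apply. My plan is to bound every covariance by Cauchy--Schwarz, $|\mathrm{Cov}(\e_{t_i^s},\e_{t_j^s}\mid S_t=s)|\le \mathbb{V}[\e_t\mid S_t=s]$, which yields $\mathbb{V}[\z_t(s)\mid S_t=s]\le \mathbb{V}[\e_t\mid S_t=s]$ regardless of the correlation structure. The strict inequality claimed when $\mathbb{V}[\e_t\mid S_t]>0$ will require a mild non-degeneracy condition (for instance at least two past visits with imperfectly correlated traces), and the edge case $n_t(s)=0$ is handled by only performing the update at states visited at least once before.
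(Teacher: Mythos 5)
Your proposal follows the same skeleton as the paper's proof: factor $\E{\alpha_t\d_t\z_t(S_t)\mid S_t=s}$ and the second moments through the conditional independence of $\d_t$ from any past-measurable quantity (Lemma~\ref{lmi}), establish $\E{\z_t(s)\mid S_t=s}=\E{\e_t\mid S_t=s}$ from the definition of the empirical mean, and reduce the variance comparison to $\mathbb{V}[z_{t,i}\mid S_t=s]\le\mathbb{V}[e_{t,i}\mid S_t=s]$. Where you genuinely diverge is the last step. The paper asserts outright that $\mathbb{V}(\z_{t,i}\mid S_t=s)=\frac{1}{n_s}\mathbb{V}(\e_{t,i}\mid S_t=s)$, i.e.\ it treats the traces recorded at distinct past visits to $s$ as i.i.d.\ samples; this gives a quantitative $1/n_s$ reduction and a clean characterisation of the equality case (equality iff $n_s=1$, or no stochasticity, or zero TD error). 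You instead decline that independence assumption---correctly observing that traces at multiple visits within the same episode are correlated through the shared prefix---and bound every covariance by Cauchy--Schwarz, which yields $\mathbb{V}[\z_t(s)\mid S_t=s]\le\mathbb{V}[\e_t\mid S_t=s]$ under arbitrary correlation structure. Your route is more robust (the paper's exact $1/n_s$ identity only holds when the recorded visits come from independent trajectories), but it buys less: it delivers only the non-strict inequality, so the strictness claim requires your added non-degeneracy condition (at least two visits whose traces are not almost surely identical). That is a fair price, since the paper's own proof already concedes equality when $n_s=1$, so the proposition's unqualified strictness statement needs a caveat under either argument. Both proofs equally gloss over the randomness of $n_t(s)$ and the implicit time-homogeneity of $\E{\e_{t'}\mid S_{t'}=s}$, so I do not count those against you.
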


\begin{proof} In Appendix.
\end{proof}

\paragraph{Interpretation}
Proposition~\ref{prop:sample_mean_and_variance} mirrors Proposition \ref{eqv} but, importantly, covers the case where we estimate the expected traces from data, rather than relying on exact estimates.  This means the benefits extend to this pure learning setting.  Again, the result holds for any trace update. The inequality is typically strict when the path leading to state $S_t=s$ is stochastic (due to environment or policy).

\def\X{\bm{\Theta}}
Next we consider what happens if we do not have Markov states and instead have to rely on, possibly non-Markovian, features $\x(s)$.  We then have to pick a function class and for the purpose of this analysis we consider linear expected traces $\z_{\X}(s) = \X \x(s)$ and values $v_{\w}(s) = \w\tr \x(s)$, as convergence for non-linear values can not always be assured even for standard TD($\lambda$) \citep{Tsitsiklis:1997}, without additional assumptions \citep[e.g.,][]{Ollivier:2018,Brandfonbrener:2020}.
The following property of the mixture trace is used in the proposition below.

\begin{restatable}{proposition}{PropMixtureTraceAsTD}
\label{prop:mixture_trace_rewrite_as_TD_trace}
The mixture trace $\ee_t$ defined in \eqref{eq:mixture} can be written as $\ee_t = \mu \ee_{t-1} + \u_t$ with decay parameter $\mu = \eta \gamma \lambda$ and signal $\u_t = (1 - \eta) \z_{\th}(S_t) + \eta \ \nabla_{\w} v_{\w}(S_t)$, such that
\begin{eqnarray}
\label{eq:mixture_trace_rewrite_as_TD_trace}
   \ee_t = \sum_{k=0}^t (\eta \g \l)^k  \left[ (1 - \eta) \z_{\th}(S_{t-k}) + \eta \ \nabla_{\w} v_{\w}(S_{t-k}) \right].
\end{eqnarray}
\end{restatable}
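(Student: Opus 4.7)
The plan is to prove the proposition by direct algebraic manipulation of the recursive definition in \eqref{eq:mixture}, followed by a straightforward unrolling of the resulting one-step recursion. This is a purely computational statement and I do not expect any significant obstacle; the only care needed is handling the initial condition of the trace and, if $\gamma_t$ is time-varying, tracking its product carefully.

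First I would take the definition $\ee_t = (1 - \eta) \z_{\th}(S_t) + \eta \bigl( \g_t \l \ee_{t-1} + \nabla_{\w} v_{\w}(S_t) \bigr)$ and redistribute the $\eta$ inside the bracket. Collecting the terms that depend on the current state $S_t$ and separating out the term that carries the previous trace, this becomes $\ee_t = \eta \g_t \l \ee_{t-1} + \bigl[ (1-\eta) \z_{\th}(S_t) + \eta \nabla_{\w} v_{\w}(S_t) \bigr]$, which is exactly $\ee_t = \mu \ee_{t-1} + \u_t$ with the stated $\mu = \eta \g \l$ and $\u_t = (1-\eta)\z_{\th}(S_t) + \eta \nabla_{\w} v_{\w}(S_t)$.

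Next I would unroll this one-step recursion. Assuming the standard convention $\ee_{-1} = \bm 0$ (consistent with how the instantaneous trace $\e_t$ is initialised in Algorithm~\ref{tdlambda_exp}), I would argue by induction on $t$ that
\begin{equation*}
\ee_t = \sum_{k=0}^{t} \mu^k \u_{t-k}.
\end{equation*}
The base case $t=0$ gives $\ee_0 = \u_0$, which matches. For the inductive step, substituting the hypothesis for $\ee_{t-1}$ into $\ee_t = \mu \ee_{t-1} + \u_t$ yields $\ee_t = \u_t + \sum_{k=0}^{t-1} \mu^{k+1} \u_{t-1-k}$, and reindexing $k \mapsto k-1$ in the sum gives the desired form. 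Substituting back the explicit expressions for $\mu$ and $\u_{t-k}$ produces exactly \eqref{eq:mixture_trace_rewrite_as_TD_trace}.

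The main thing to be careful about is the implicit assumption of a constant $\gamma$: the proposition writes $\mu = \eta \gamma \lambda$ as a single decay factor, but \eqref{eq:mixture} uses the time-indexed $\gamma_t$. If one allows time-varying discounts, the unrolled formula should carry a product $\prod_{j=1}^{k} \gamma_{t-j+1}$ in place of $\gamma^k$; I would note this briefly and otherwise follow the paper's convention of a constant discount when writing the final closed form.
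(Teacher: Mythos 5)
Your proposal is correct and follows essentially the same route as the paper's proof: rewrite the recursion as $\ee_t = \eta\g\l\,\ee_{t-1} + \u_t$ and unroll it (the paper does the unrolling by repeated substitution rather than formal induction, but the content is identical). Your remark about the time-indexed $\gamma_t$ is apt --- the paper's own derivation silently collapses the product $\g_t\l_t\g_{t-1}\l_{t-1}\cdots$ into $(\g\l)^k$ in the final line, so flagging the constant-discount convention is a small improvement in rigour.
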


\begin{proof} In Appendix.
\end{proof}
Recall $\ee_t = \e_t$ when $\eta = 1$, and $\ee_t = \z_{\th}(S_t)$ when $\eta=0$, as can be verified by inspecting \eqref{eq:mixture_trace_rewrite_as_TD_trace} (and using the convention $0^0 = 1$). We use this proposition to prove the following.

\begin{restatable}{proposition}{PropETFixedPoint}
\label{prop:linear_ET_fixed_point}
When using approximations $z_{\X}(s) = \X \x(s)$ and $v_{\w}(s) = \w\tr \x(s)$ then, if $(1 - \eta) \X + \eta \mathbb{I}$ is non-singular, ET($\lambda$, $\eta$) has the same fixed point as TD($\lambda \eta$).
\end{restatable}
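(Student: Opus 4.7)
The plan is to apply Proposition~\ref{prop:mixture_trace_rewrite_as_TD_trace} to rewrite the mixture trace in the linear case as a state-independent matrix times a standard accumulating trace of decay $\eta \g \l$, and then to use non-singularity to reduce the fixed-point equation of ET($\lambda, \eta$) to that of TD($\lambda \eta$).

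First I would substitute $\z_{\X}(s) = \X \x(s)$ and $\nabla_{\w} v_{\w}(s) = \x(s)$ into the closed-form expansion given by Proposition~\ref{prop:mixture_trace_rewrite_as_TD_trace}. The bracketed factor in each summand then becomes $(1-\eta)\X \x(S_{t-k}) + \eta \x(S_{t-k}) = \mathbf{M} \x(S_{t-k})$, where $\mathbf{M} := (1-\eta)\X + \eta \I$ is independent of state and time. Pulling $\mathbf{M}$ out of the sum yields
$$\ee_t \;=\; \mathbf{M} \sum_{k=0}^{t} (\eta \g \l)^k \x(S_{t-k}) \;=\; \mathbf{M}\, \tilde{\e}_t,$$
where $\tilde{\e}_t$ is exactly the accumulating eligibility trace used by TD($\lambda \eta$), i.e., the one satisfying $\tilde{\e}_t = \eta \g \l \, \tilde{\e}_{t-1} + \x(S_t)$.

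Next I would compare the two fixed-point conditions. Under the usual on-policy stationary distribution (as in the classical linear TD analysis of \citet{Tsitsiklis:1997}), the fixed point $\w^*$ of ET($\lambda, \eta$) is characterised by $\E{\d_t(\w^*) \ee_t} = 0$, while that of TD($\lambda \eta$) satisfies $\E{\d_t(\w^*) \tilde{\e}_t} = 0$. Substituting $\ee_t = \mathbf{M} \tilde{\e}_t$ and pulling the constant matrix out of the expectation gives $\E{\d_t(\w^*) \ee_t} = \mathbf{M} \E{\d_t(\w^*) \tilde{\e}_t}$, and non-singularity of $\mathbf{M}$ shows that one expression vanishes if and only if the other does. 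Hence both algorithms share the same fixed point in $\w$.

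The only delicate point is that $\X$ is itself a learned parameter and could in principle be time-varying; the cleanest reading of the statement is to treat it as a conditional claim, fixing any $\X$ for which $\mathbf{M}$ is non-singular so that $\mathbf{M}$ legitimately commutes with the expectation. Once the factorisation $\ee_t = \mathbf{M} \tilde{\e}_t$ is obtained via Proposition~\ref{prop:mixture_trace_rewrite_as_TD_trace}, the rest is a one-line linear-algebra manipulation and is not the real obstacle of the proof.
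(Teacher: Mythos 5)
Your proposal is correct and follows essentially the same route as the paper's proof: both factor the mixture trace as $\bigl[(1-\eta)\X + \eta\mathbb{I}\bigr]$ times the accumulating trace of TD($\lambda\eta$) via Proposition~\ref{prop:mixture_trace_rewrite_as_TD_trace}, and then cancel the non-singular matrix in the fixed-point equation. Your phrasing of the last step (one expectation vanishes iff the other does) is marginally cleaner than the paper's explicit inversion, but the argument is the same.
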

\begin{proof}
In Appendix.
\end{proof}

\paragraph{Interpretation}
This result implies that linear ET($\lambda$, $\eta$) converges under similar conditions as linear TD($\lambda'$) for $\lambda' = \lambda \cdot \eta$. In particular, when $\X$ is non-singular, using the approximation $\z_{\X}(s) = \X \x(s)$ in ET($\lambda$, 0) = ET($\lambda$) implies convergence to the fixed point of TD(0).

Though ET($\lambda$, $\eta$) and TD($\lambda \eta$) have the same fixed point, the algorithms are not equivalent. In general, their updates are not the same. Linear approximations are more general than tabular functions (which are linear functions of a indicator vector for the current state), and we have already seen in Figure \ref{illustration1} that ET($\lambda$) behaves quite differently from both TD(0) and TD($\lambda$), and we have seen its variance can be lower in Propositions \ref{eqv} and \ref{prop:sample_mean_and_variance}.
Interestingly, $\X$ resembles a preconditioner that speeds up the linear semi-gradient TD update, similar to how second-order optimisation algorithms \citep{Amari:1998,Martens:2016} precondition the gradient updates.

\section{Empirical analysis}

From the insights above, we expect that ET($\lambda$) yields lower prediction errors because it has lower variance and aggregates information across episodes better.  In this section we empirically investigate expected traces in several experiments. Whenever we refer to ET($\lambda$), this is equivalent to ET($\lambda$, $0$).

\begin{figure}[t]
\centering
\includegraphics[width=1.0\linewidth]{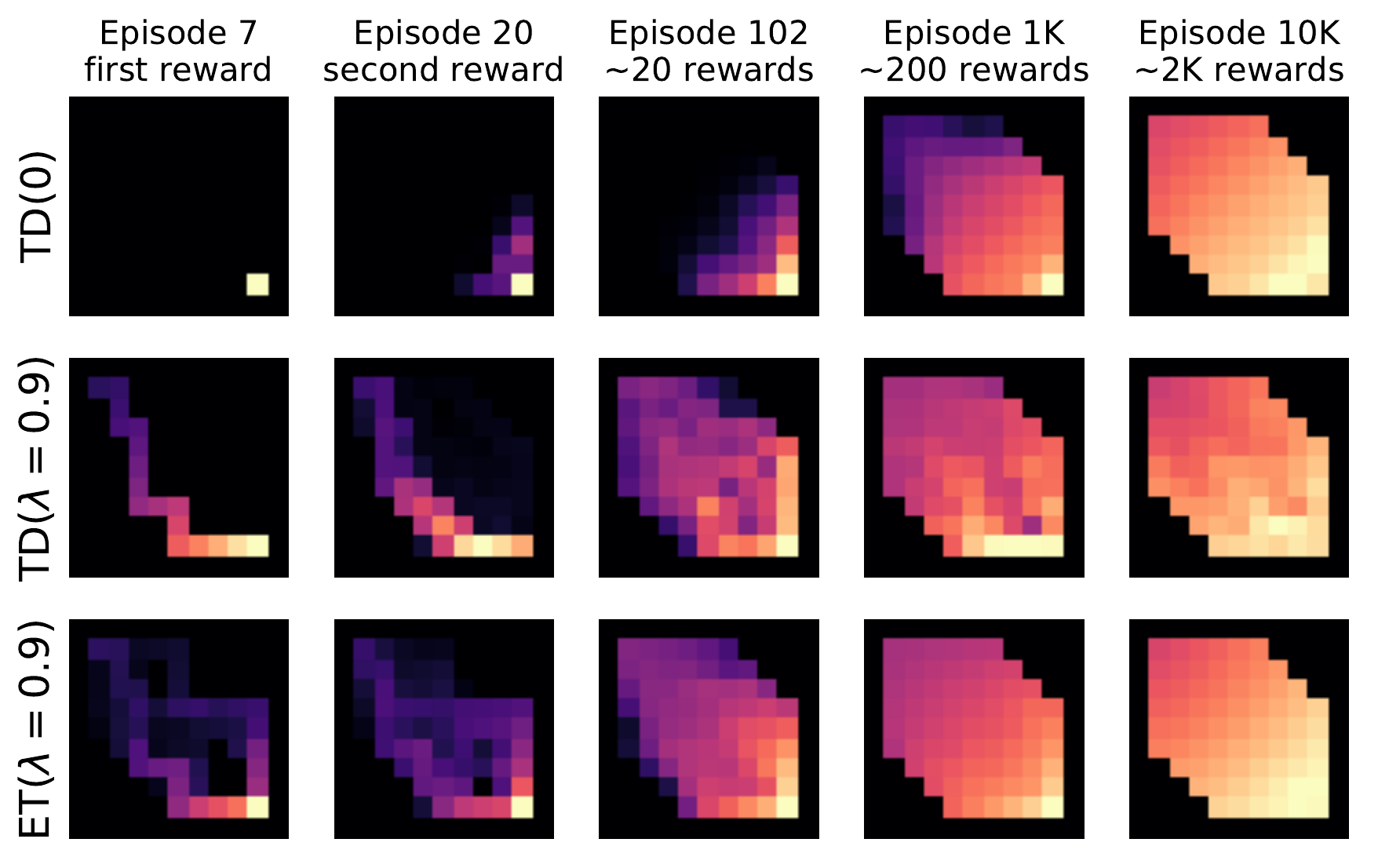}
\caption{\label{illustration2} In the same setting as Figure \ref{illustration1}, we show later value estimates after more rewards have been observed. TD(0) learns slowly but steadily, TD($\lambda$) learns faster but with higher variance, and ET($\lambda$) learns both fast and stable.}
\end{figure}
\subsection{An open world}
First consider the grid world depicted in Figure \ref{illustration1}. The agent randomly moves right or down (excluding moves that would hit a wall), starting from the top-left corner. Any action in the bottom-right corner terminates the episode with $+1$ reward with probability $0.2$, and $0$ otherwise. All other rewards are~$0$.

Figure \ref{illustration1} shows the value estimates after the first positive reward, which occurred in the seventh episode. TD(0) updated a single state, TD($\lambda$) updated earlier states in that episode, and ET($\lambda$) additionally updated states from previous episodes.
Figure \ref{illustration2} shows the values after the second reward, and after roughly $20$, $200$, and $2000$ rewards (or $100$, $1000$, and $10,000$ episodes, respectively).
ET($\lambda$) converged faster than TD(0), which propagated information slowly, and than TD($\lambda$), which had higher variance.  All step sizes decayed as $\alpha = \beta = \sqrt{1/k}$, where $k$ is the current episode number.

\subsection{A multi-chain}

\begin{figure}
\centering
\includegraphics[width=0.75\linewidth]{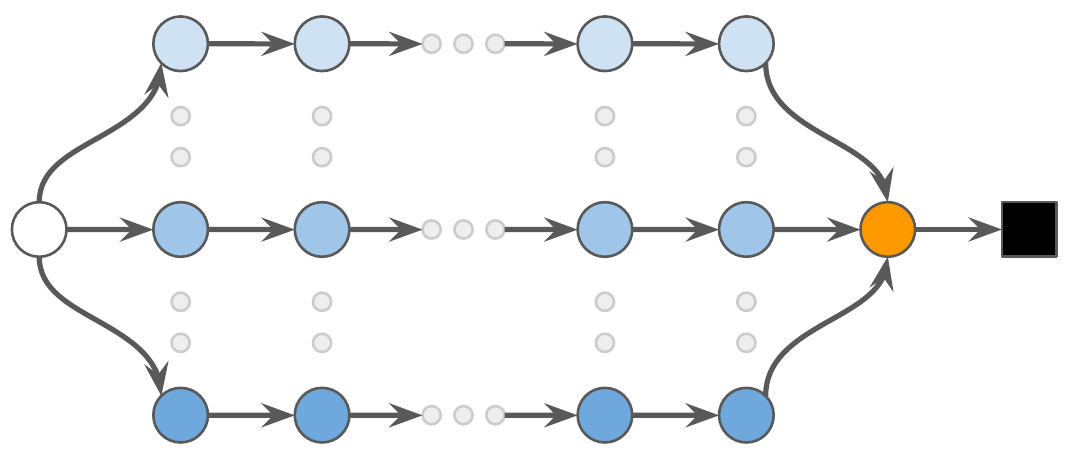}
\caption{\label{multi_chain} Multi-chain environment. Each episode starts in the left-most (white) state, and randomly transitions to one of $m$ parallel (blue) chains of identical length $n$. After $n$ steps, the agent always transitions to the same (orange) state, regardless of the chain it was in. The next step the episode terminates. Each reward is $+1$, except on termination when it either is $+1$ with probability $0.9$ or $-1$ with probability $0.1$.}
\end{figure}

\begin{figure*}[t]
\centering
\includegraphics[width=0.25\linewidth]{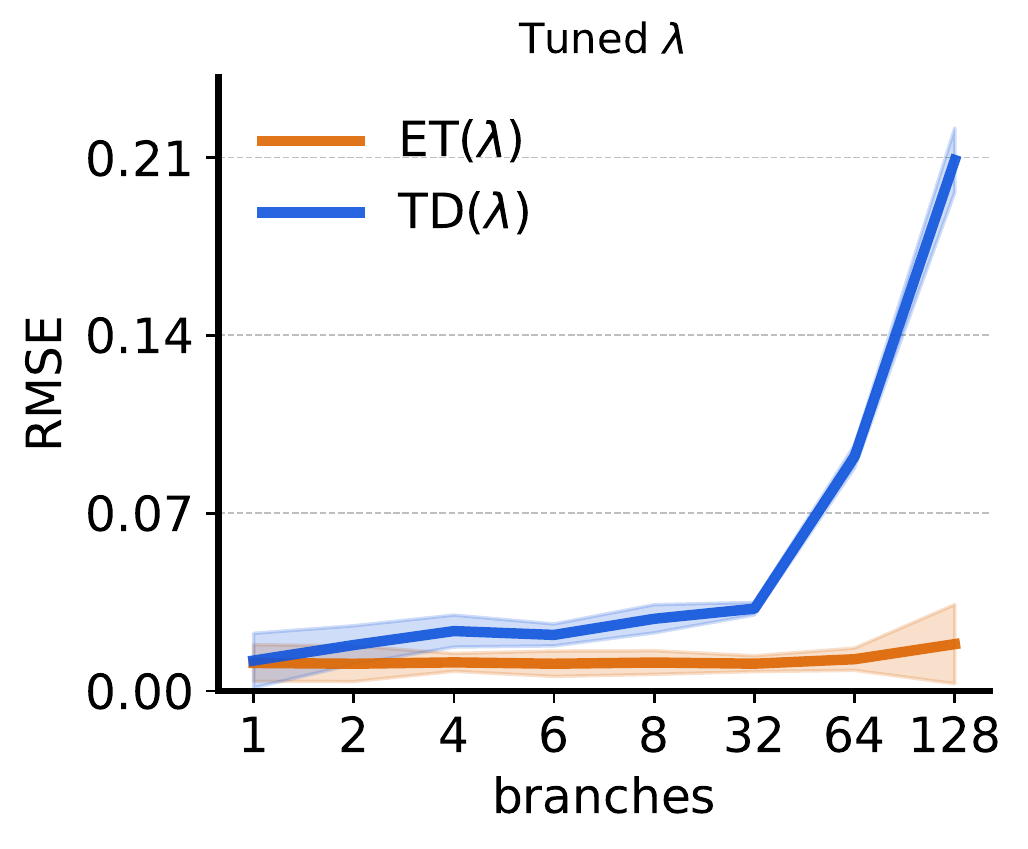}
\includegraphics[width=0.71\linewidth]{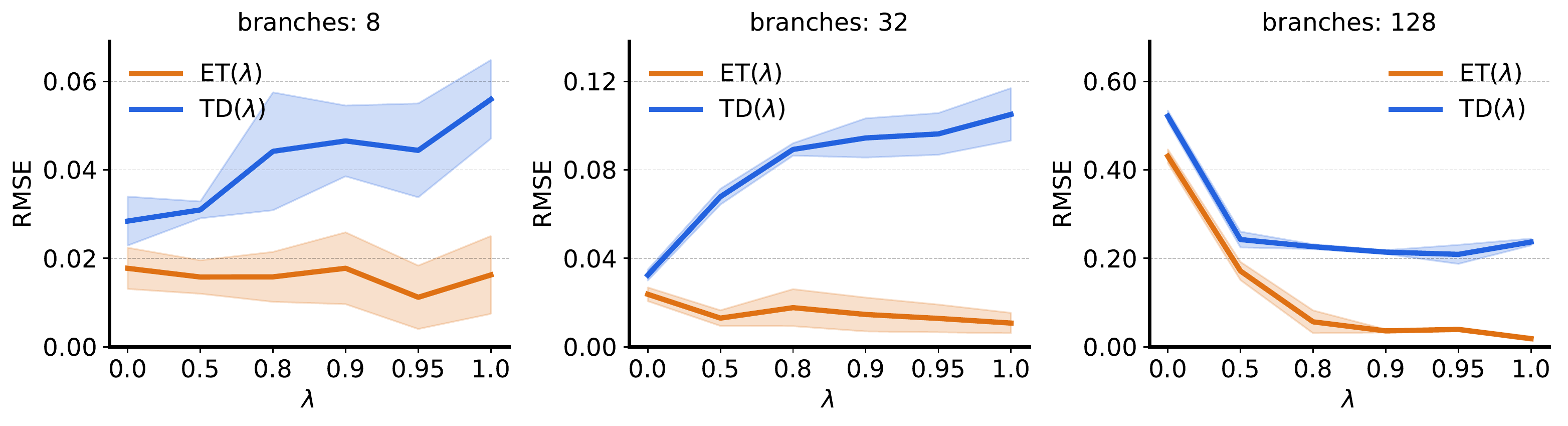}
\caption{\label{tabular} Prediction errors in the multi-chain. ET($\lambda$) (\orange{orange}) consistently outperformed TD($\lambda$) (\blue{blue}). Shaded areas depict standard errors across 10 seeds.}
\includegraphics[width=0.26\linewidth]{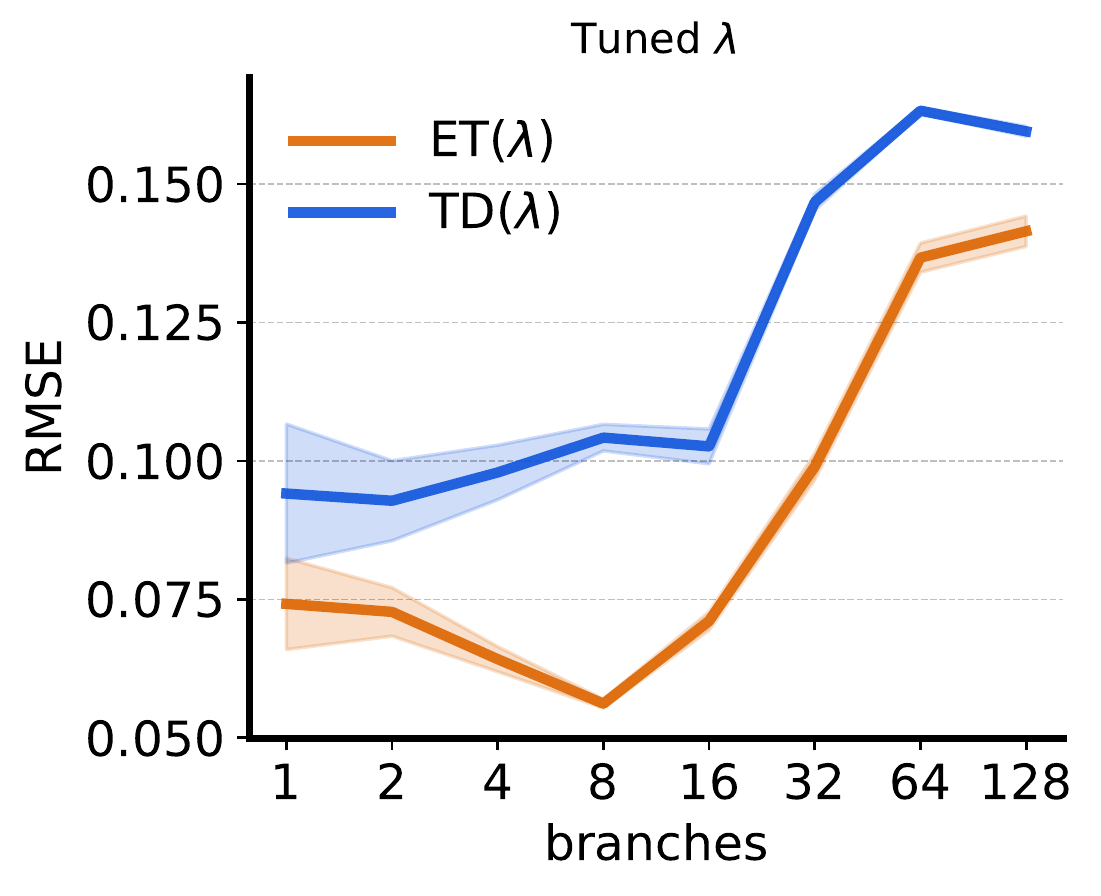}
\includegraphics[width=0.44\linewidth]{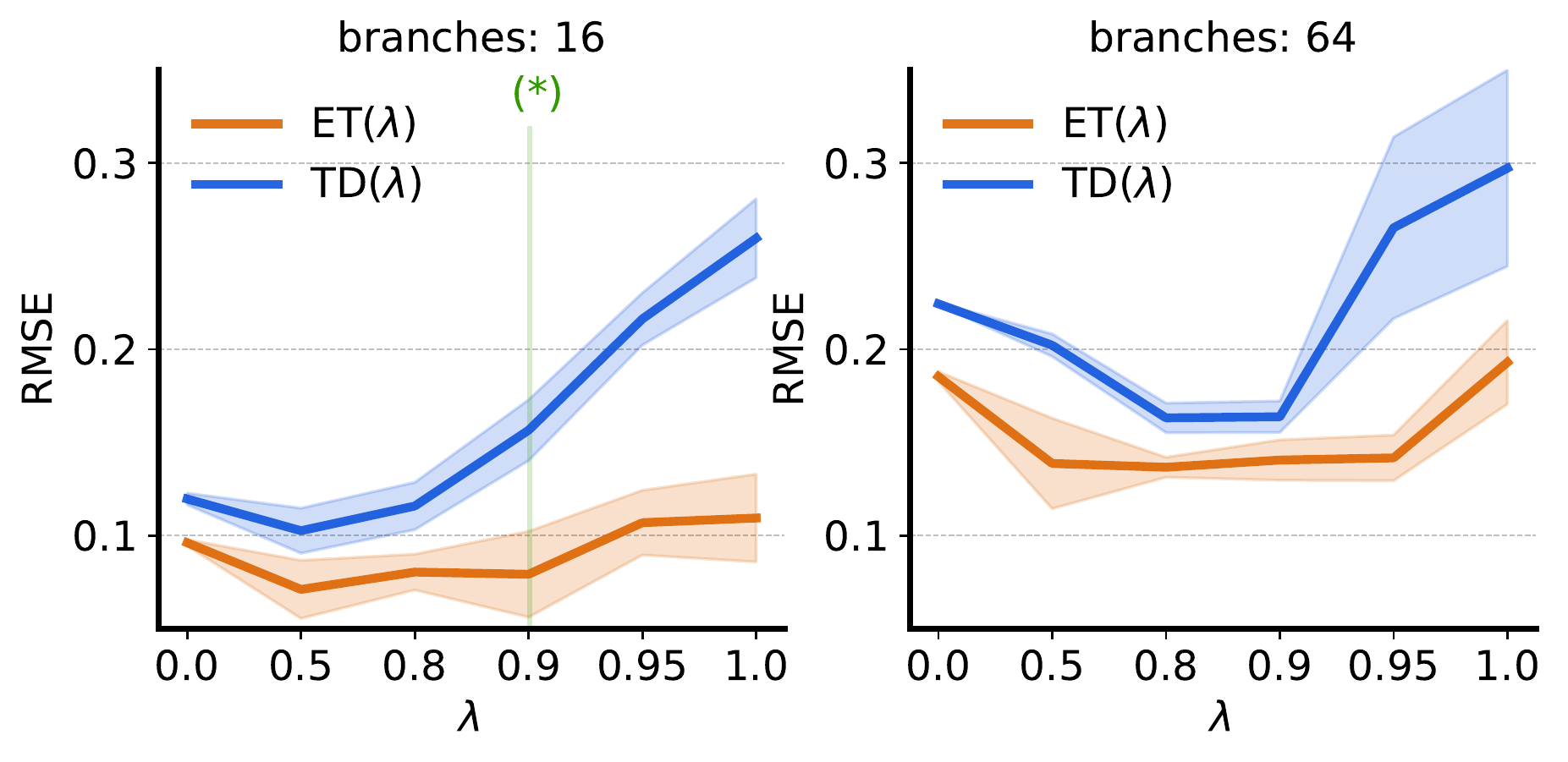}
\includegraphics[width=0.24\linewidth]{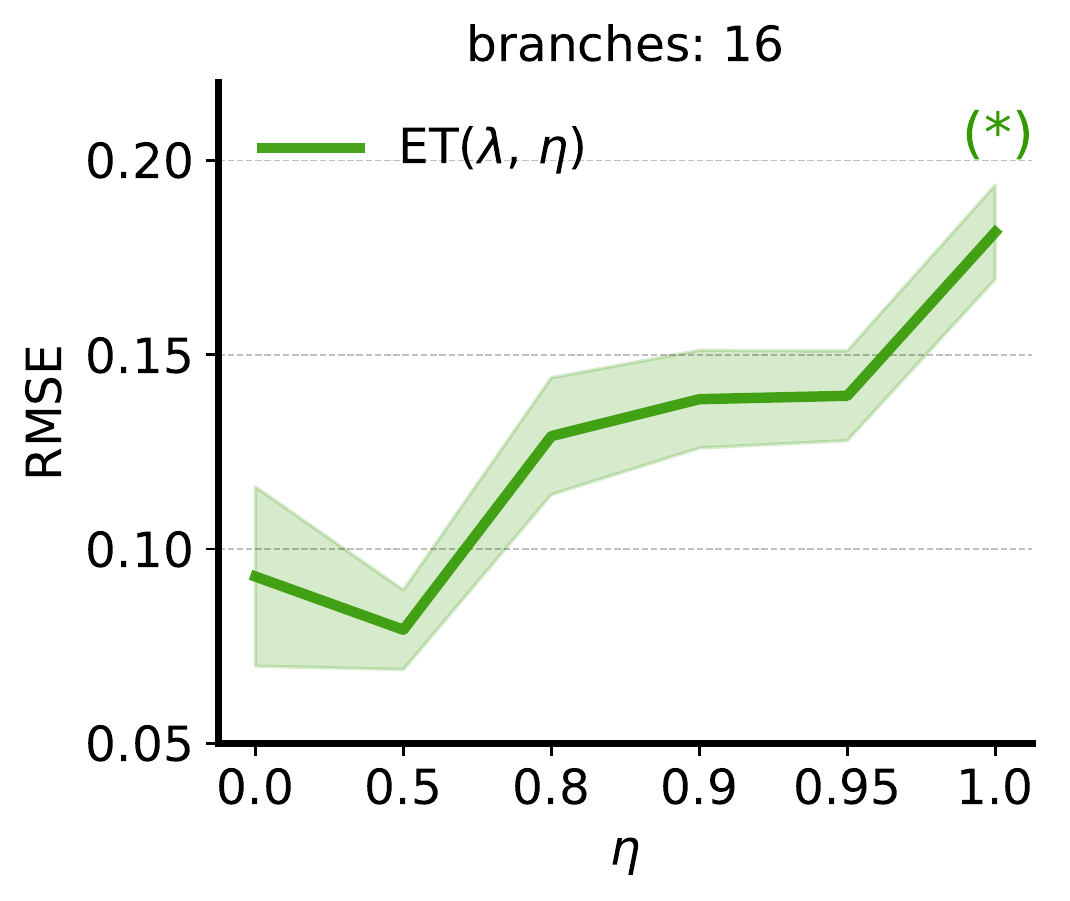}
\caption{\label{linear} Comparing value error with linear function approximation a) as function of the number of branches (left), b) as function of $\lambda$ (center two plots) and c) as function of $\eta$ (right). The left three plots show comparisons of TD($\lambda$) (\blue{blue}) and ET($\lambda$) (\orange{orange}), showing ET($\lambda$) attained lower prediction errors. The right plot interpolates between these algorithms via ET($\lambda$, $\eta$), from ET($\lambda$) = ET($\lambda$, $0$) to ET($\lambda$, $1$) = TD($\lambda$), with $\lambda=0.9$ (corresponding to a vertical slice indicated in the second plot).}
\end{figure*}
We now consider the multi-chain shown in Figure~\ref{multi_chain}. We first compare TD($\lambda$) and ET($\lambda$) with tabular values on various variants of the multi-chain, corresponding $n=4$ and $m \in \{1, 2, 4, 8, ..., 128\}$. The left-most plot in Figure \ref{tabular} shows the average root mean squared error (RMSE) of the value predictions after $1024$ episodes. We ran 10 seeds for each combination of step size $1 / t^d$ with $d \in \{0.5, 0.8, 0.9, 1\}$ and $\lambda \in \{0  , 0.5 , 0.8 , 0.9 , 0.95, 1 \}$.

The left plot in Figure \ref{tabular} shows value errors for different $m$, minimized over $d$ and $\lambda$. The prediction error of TD($\lambda$) (\blue{blue}) grew quickly with the number of parallel chains. ET($\lambda$) (\orange{orange}) scaled better, because it updates values in multiple chains (from past episodes) upon receiving a surprising reward (e.g., $-1$) on termination. The other three plots in Figure \ref{tabular} show value error as a function of $\lambda$ for a subset of problems corresponding to $m\in \{8, 32, 128\}$. The dependence on $\lambda$ differs across algorithms and problem instances; ET($\lambda$) always achieved lower error than TD($\lambda$). Further analysis, including on step-size sensitivity, is included in the appendix.

Next, we encode each state with a feature vector $\x(s)$ containing a binary indicator vector of the branch, a binary indicator of the progress along the chain, a bias that always equals one, and two binary features indicating when we are in the start (white) or bottleneck (orange) state. We extend the lengths of the chains to $n=16$. Both TD($\lambda$) and ET($\lambda$) use a linear value function $v_{\w}(s) = \w\tr \x(s)$, and ET($\lambda$) uses a linear expected trace $z_{\X}(s) = \X \x(s)$. All updates use the same constant step size $\alpha$. The left plot in Figure \ref{linear} shows the average root mean squared value error after $1024$ episodes (averaged over 10 seeds). For each point the best constant step size $\alpha \in \{0.01, 0.03, 0.1\}$ (shared across all updates) and $\lambda \in \{0  , 0.5 , 0.8 , 0.9 , 0.95, 1\}$ is selected. ET($\lambda$) (\orange{orange}) attained lower errors across all values of $m$ (left plot), and for all $\l$ (center two plots, for two specific $m$). The right plot shows results for smooth interpolations via $\eta$, for $\lambda=0.9$ and $m=16$. The full expected trace ($\eta=0$) performed well here, we expect in other settings the additional flexibility of $\eta$ could be beneficial.

\subsection{Expected traces in deep reinforcement learning}
\label{sec:deep}
\begin{figure*}[t]
\includegraphics[width=\textwidth]{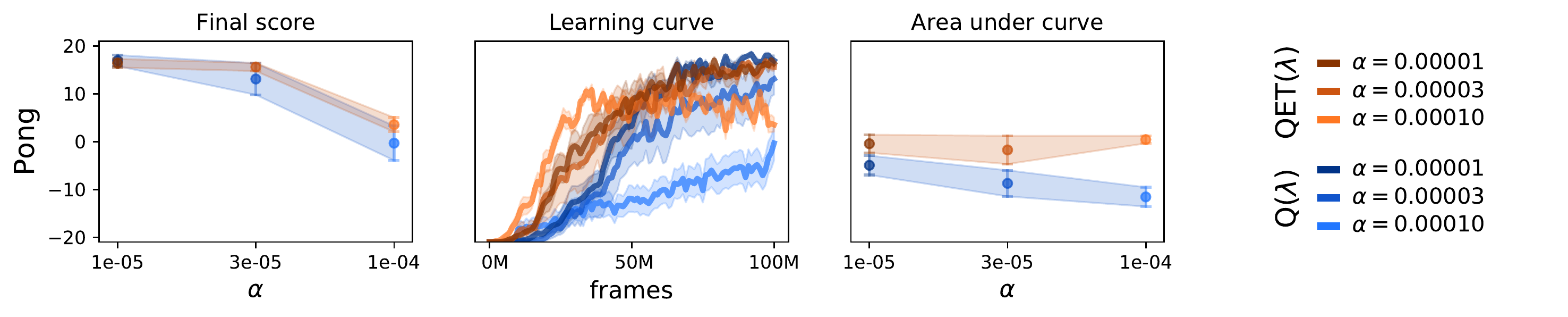}
\includegraphics[width=\textwidth]{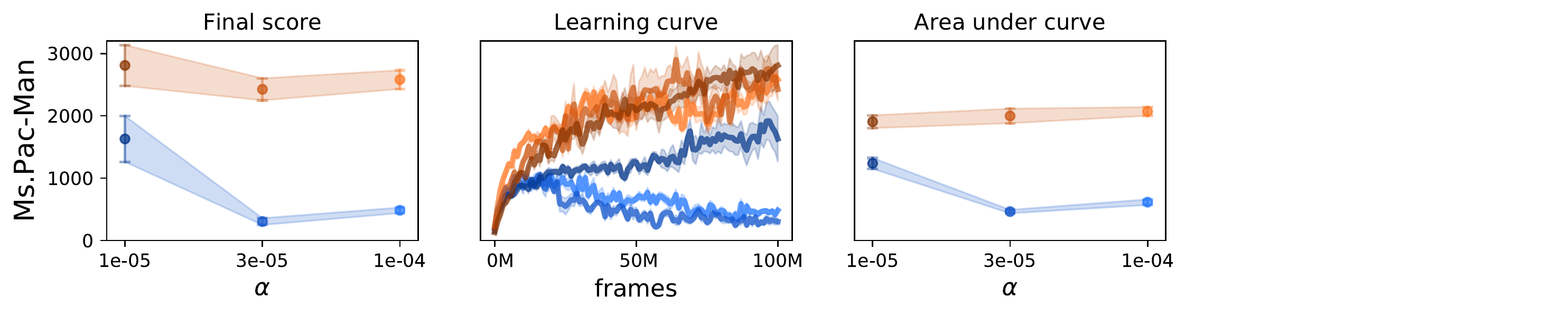}
\caption{\label{deep_et} Performance of Q($\lambda$) ($\eta=1$, \textbf{\textcolor{NavyBlue}{blue}}) and QET($\lambda$) ($\eta=0$, \textbf{\textcolor{Orange}{orange}}) on Pong and Ms.Pac-Man for various learning rates. Shaded regions show standard error across 10 random seeds. All results are for $\lambda=0.95$. Further implementation details and hyper-parameters are in the appendix.}
\end{figure*}

(Deep) neural networks are a common choice of function class in reinforcement learning \citep[e.g.,][]{Werbos:1990,Tesauro:92,Tesauro:94,Bertsekas:96,Prokhorov:1997,Riedmiller:2005,vanHasselt:2012,Mnih:2015,vanHasselt:2016,Wang:2016,Silver:2016,Duan:2016,Hessel:2018}. Eligibility traces are not very commonly combined with deep networks \citep[but see][]{Tesauro:92,Elfwing:2018}, perhaps in part because of the popularity of experience replay \citep{Lin:1992,Mnih:2015,Horgan:2018}.

Perhaps the simplest way to extend expected traces to deep neural networks is to first separate the value function into a representation $\x(s)$ and a value $v_{(\w, \bm{\xi})}(s) = \w\tr\x_{\bm{\xi}}(s)$, where $\x_{\bm{\xi}}$ is some (non-linear) function of the observations $s$.\footnote{Here $s$ denotes observations to the agent, not a full environment state---$s$ is not assumed to be Markovian.}  We can then apply the same expected trace algorithm as used in the previous sections by learning a separate linear function $\z_{\X}(s) = \X \x(s)$ using the representation which is learned by backpropagating the value updates:
\begin{align*}
    \bm{\xi}_{t+1} & = \bm{\xi}_t + \alpha \delta \e^{\bm{\xi}}_t
    \quad\text{ and }\quad 
    \w_{t+1} = \w_t + \alpha \delta \z_{\X}(S_t) \,,\\
\text{where }~~
    \e^{\bm{\xi}}_t & = \g_t \l \e^{\bm{\xi}}_{t-1} + \nabla_{\bm{\xi}} v_{(\w, \bm{\xi})}(S_t) \,,\\
    \e^{\w}_t & = \g_t \l \e^{\w}_{t-1} + \nabla_{\w} v_{(\w, \bm{\xi})}(S_t) \,,
\end{align*}
and then updating $\X$ by minimising the sum of component-wise squared differences between $\e^{\w}_t$ and $\z_{\X_t}(S_t)$.

Interesting challenges appear outside the fully linear case. First, the representation will itself be updated and will have its own trace $\e^{\bm{\xi}}_t$.  Second, in the control case we optimise behaviour: the policy will change. Both these properties of the non-linear control setting imply that the expected traces must track a non-stationary target. We found that being able to track this rather quickly improves performance: the expected trace parameters $\X$ in the following experiment were updated with a step size of $\beta=0.1$.

We tested this idea on two canonical Atari games: Pong and Ms. Pac-Man. The results in Figure \ref{deep_et} show that the expected traces helped speed up learning compared to the baseline which uses accumulating traces, for various step sizes.  Unlike most prior work on this domain, which often relies on replay \citep{Mnih:2015,Schaul:2016,Horgan:2018} or parallel streams of experience \citep{Mnih:2016}, these algorithms updated the values online from a single stream of experience. Further details are in the Appendix.

These experiments demonstrate that the idea of expected traces already extends to non-linear function approximation, such as deep neural networks. We consider this to be a rich area of further investigations. The results presented here are similar to earlier results \citep[e.g.,][]{Mnih:2015} and are not meant to compete with state-of-the-art performance results, which often depend on replay and much larger amounts of experience \citep[e.g.,][]{Horgan:2018}.

\section{Discussion and extensions}
We now discuss various interesting interpretations and relations, and discuss promising extensions.

\subsection{Predecessor features}
For linear value functions the expected trace $z(s)$ can be expressed non recursively as follows:
\begin{align}
\label{eq:pred_feat}
    \z(s)
    & = \E{ \sum_{n=0}^\infty \l^{(n)}_{t} \g^{(n)}_{t} \x_{t-n} \mid S_t = s} \,,
\end{align}

where $\g_k^{(n)} \equiv \prod_{j=k-n}^{k} \g_j$.
This is interestingly similar to the definition of the \textit{successor features} \citep{Barreto:2017}:
\begin{align}
\label{eq:succ_feat}
\psi(s) = \E{ \sum_{n=1}^\infty \g^{(n-1)}_{t} \x_{t+n} \mid S_t = s } \,.
\end{align}
The summation in \eqref{eq:succ_feat} is over future features, while in \eqref{eq:pred_feat} we have a sum over features already observed by the agent. 
We can thus think of linear expected traces as \textit{predecessor features}. A similar connection was made in the tabular setting by \citet{Pitis18}, relating source traces, which aim to estimate the source matrix $(I-\gamma P)^{-1}$, to successor representations \citep{dayan1993improving}. In a sense, the above generalises this insight. In addition to being interesting in its own right, this connection allows for an intriguing interpretation of $\z(s)$ as a multidimensional value function. Like with successor features, the features $\x_{t}$ play the role of rewards, discounted with $\gamma \cdot \lambda$ rather than $\gamma$, and with time flowing backwards.

Although the predecessor interpretation only holds in the linear case, it is also of interest as a means to obtain a practical implementation of expected traces with non-linear function approximation, for instance applied only to the linear `head' of a deep neural network. We used this `predecessor feature trick' in our Atari experiments described earlier.

\subsection{Relation to model-based reinforcement learning}

Model-based reinforcement learning provides an alternative approach to efficient credit assignment. The general idea is to construct a model that estimates state-transition dynamics, and to update the value function based upon hypothetical transitions drawn from the model \citep{Sutton:1990}, for example by prioritised sweeping \citep{Moore:93,vanSeijen:2013}. In practice, model-based approaches have proven challenging in environments (such as Atari games) with rich perceptual observations, compared to model-free approaches that more directly update the agent’s policy and predictions \citep{vanHasselt:2019}.

In some sense, expected traces also construct a model of the environment---but one that differs in several key regards from standard state-to-state models used in model-based reinforcement learning. First, expected traces estimate \emph{past} quantities rather than \emph{future} quantities. Second, they estimate the accumulation of gradients over a multi-step trajectory, rather than full transition dynamics, thereby focusing on those aspects that matter for the update. Third, they allow credit assignment across these potential past trajectories with a single update, without the iterative computation that is typically required when using a more explicit model.  These differences may be important to side-step some of the challenges faced in model-based learning.

\subsection{Batch learning and replay}
We have mainly considered the online learning setting in this paper.  It is often convenient to learn from batches of data, or replay transitions repeatedly, to enhance data efficiency. A natural extension is replay the experiences sequentially \citep[e.g.][]{Kapturowski:2018}, but perhaps alternatives exist.  We now discuss one potential extension.

We defined a mixed trace $\ee_t$ that mixes the instantaneous and expected traces.  Optionally the expected trace $\z_t$ can be updated towards the mixed trace $\ee_t$ as well, instead of towards the instantaneous trace $\e_t$. Analogously to TD($\lambda$) we propose to then use at least one real step of data:
\begin{equation}\label{mixed_z_update}
\Delta\th_t \equiv \beta \left(\bm{\nabla}_t + \g_t \l_t \ee_{t-1} - \z_{\th}(S_t)\right)\tr \frac{\partial \z_{\th}(S_t)}{\partial \th} \,,
\end{equation}
with $\bm{\nabla}_t \equiv \nabla_{\w} v_{\w}(S_t)$.
This is akin to a forward-view $\lambda$-return update, with $\nabla_{\w} v_{\w}(S_t)$ in the role of (vector) reward, and $\z_{\th}$ of value, and discounted by $\l_t\g_t$, but reversed in time. In other words, this can be considered a sampled Bellman equation \citep{Bellman:1957} but backward in time.

When we then choose $\eta=0$, then $\ee_{t-1} = z_{\th}(S_{t-1})$, and then the target in \eqref{mixed_z_update} only depends on a single transition. Interestingly, that means we can then learn expected traces from \emph{individual} transitions, sampled out of temporal order, for instance in batch settings or when using replay.

\subsection{Application to other traces}
We can apply the idea of expected trace to more traces than considered here.  We can for instance consider the characteristic eligibility trace used in REINFORCE \citep{Williams:1992} and related policy-gradient algorithms \citep{Sutton:2000}.

Another appealing application is to the follow-on trace or \emph{emphasis}, used in emphatic temporal difference learning \citep{Sutton:2016} and related algorithms \citep[e.g.,][]{Imani:2018}. Emphatic TD was proposed to correct an important issue with off-policy learning, which can be unstable and lead to diverging learning dynamics. Emphatic TD weights updates according to 1) the inherent interest in having accurate predictions in that state and, 2) the importance of predictions in that state for updating other predictions. Emphatic TD uses scalar `follow-on' traces to determine the `emphasis' for each update.  However, this follow-on trace can have very high, even infinite, variance.  Instead, we might estimate and use its expectation instead of the instantaneous emphasis. A related idea was explored by \citet{Zhang:2019} to obtain off-policy actor critic algorithms.

\section{Conclusion}
We have proposed a mechanism for efficient credit assignment, using the expectation of an eligibility trace.  We have demonstrated this can sometimes speed up credit assignment greatly, and have analyzed concrete algorithms theoretically and empirically to increase understanding of the concept.

Expected traces have several interpretations. First, we can interpret the algorithm as counterfactually updating multiple possible trajectories leading up to the current state. Second, they can be understood as trading off bias and variance, which can be done smoothly via a unifying $\eta$ parameter, between standard eligibility traces (low bias, high variance) and estimated traces (possibly higher bias, but lower variance). Furthermore, with tabular or linear function approximation we can interpret the resulting expected traces as predecessor states or features---object analogous to successor states or features, but time-reversed. Finally, we can interpret the linear algorithm as preconditioning the standard TD update, thereby potentially speeding up learning. These interpretations suggest that a variety of complementary ways to potentially extend these concepts and algorithms.

\bibliography{bibliography}
\appendix
\newpage
\onecolumn
\section{Appendix}

\subsection{Proof of Lemma~\ref{lmi} }
\label{sec:proof_markov_ind}

We start with a formal definition of a Markov state:

\begin{definition} 
\label{def:markov_state}
\textbf{Markov property}: we say a state $s$ is Markov if
\begin{equation*}
p(R_{t+1}, S_{t+1} | A_{t}, S_{t}, R_{t-1}, A_{t-1}, S_{t-1}...)
= p(R_{t+1}, S_{t+1} | A_{t}, S_{t}).
\end{equation*}
\end{definition}

\noindent
Next, we show that a Markov state implies a similar property for the transition probabilities induced by a policy $\pi$:

\begin{property} 
\label{prop:markov_state_policy}
Let $p_{\pi}$ be the transition probabilities induced by policy $\pi$. Then, if $s$ is Markov, we have that
\begin{equation*}
p_{\pi}(R_{t+1}, S_{t+1} | S_{t}, R_{t-1}, A_{t-1}, S_{t-1}...)
= p_{\pi}(R_{t+1}, S_{t+1} | S_{t}).
\end{equation*}
\end{property}
\begin{proof}
\begin{align}
\label{eq:def_pi_prob}
p_{\pi}(R_{t+1}, S_{t+1} | S_{t}, R_{t-1}, A_{t-1}, S_{t-1}...) 
& = \int_a \pi(a | S_t) p(S_{t+1}, R_{t+1} | A_{t} = a, S_{t}, R_{t-1}, A_{t-1}, S_{t-1}...) da \\   
& = \int_a \pi(a | S_t) p(S_{t+1}, R_{t+1} | A_{t} = a, S_{t}) da. \tag{Markov property}   
\end{align}
\end{proof}

\noindent
Using the above, we can prove Lemma~\ref{lmi}:

\setcounter{lemma}{0}
\begin{lemma}
If $s$ is Markov, then 
 $\E{ \d_t \e_t \mid S_t=s} = \E{ \d_t \mid S_t=s } \E{ \e_t \mid S_t=s }$.
\end{lemma}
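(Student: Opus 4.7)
The plan is to exploit the fact that $\e_t$ is a function only of the past trajectory $\tau_{0:t}$ (it is built from $S_0, A_0, \ldots, S_t$ and the gradients at these states), whereas $\d_t = R_{t+1} + \g_{t+1} v_\w(S_{t+1}) - v_\w(S_t)$ depends on the current state $S_t$ and the next transition $(R_{t+1}, S_{t+1})$. Conditioning on $S_t = s$ pins down the part $\d_t$ shares with $\e_t$, so the remaining randomness of $\d_t$ lies entirely in the next transition, and the randomness of $\e_t$ lies entirely in the past. The Markov property (together with Property~\ref{prop:markov_state_policy}) says exactly that the past and the next transition are conditionally independent given $S_t$.

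First I would write $\e_t = f(\tau_{0:t})$ for some deterministic function $f$, which is immediate from the recursive definition $\e_t = \g_t \l \e_{t-1} + \nabla_\w v_\w(S_t)$, together with the initial condition $\e_{-1} = \bm 0$ (or any value that itself only depends on earlier history). Then I would write $\d_t = g(S_t, R_{t+1}, S_{t+1})$ with $g(s, r, s') = r + \g_{t+1} v_\w(s') - v_\w(s)$. Thus, conditional on $S_t = s$, the trace $\e_t$ is a (measurable) function of $\tau_{0:t-1}$ and $s$, while $\d_t$ is a (measurable) function of $(R_{t+1}, S_{t+1})$ and $s$.

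Next I would invoke Property~\ref{prop:markov_state_policy}: the conditional distribution of $(R_{t+1}, S_{t+1})$ given $\tau_{0:t}$ depends only on $S_t$. This gives that, conditional on $S_t = s$, the past trajectory $\tau_{0:t}$ and the next transition $(R_{t+1}, S_{t+1})$ are independent. Since $\e_t$ is $\sigma(\tau_{0:t})$-measurable and $\d_t$ is $\sigma(S_t, R_{t+1}, S_{t+1})$-measurable, it follows that $\e_t \perp \d_t \mid S_t = s$.

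Finally I would apply the standard factorisation of the expectation of a product of conditionally independent random variables, componentwise on the vector $\e_t$:
\[
\E{ \d_t \e_t \mid S_t = s } = \E{ \d_t \mid S_t = s } \, \E{ \e_t \mid S_t = s },
\]
which is the claim. The only potentially subtle step is the conditional independence argument, because one must be careful that $\e_t$ contains $\nabla_\w v_\w(S_t)$ (a function of the conditioned-on variable $S_t$) and $\d_t$ also contains $v_\w(S_t)$; conditioning on $S_t = s$ turns these into constants, so they factor out cleanly and do not disrupt the independence of the remaining past- and future-dependent randomness.
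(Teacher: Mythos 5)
Your proof is correct and follows essentially the same route as the paper's: both establish that $\d_t$ and $\e_t$ are conditionally independent given $S_t=s$ — using Property~\ref{prop:markov_state_policy} to argue that the next transition $(R_{t+1},S_{t+1})$ is independent of the past trajectory given $S_t$ — and then factorise the conditional expectation of the product. Your explicit remark that the $v_\w(S_t)$ and $\nabla_\w v_\w(S_t)$ terms become constants under the conditioning is a nice clarification, but the argument is the same.
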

\begin{proof}
First note that the expectations above are with respect to the transition probabilities $p_{\pi}$ as defined in ~(\ref{eq:def_pi_prob}). That noted, the result trivially follows from the fact that, when $s$ is Markov, the two random variables $\d_t$ and $\e_t$ are independent conditioned on $S_t$. To see why this is so, note that $\d_t$ is defined as
\begin{align}
    \d_t = R_{t+1} + \gamma_{t+1} v_{\w}(S_{t+1}) - v_{\w}(S_t).
\end{align}
Since we are conditioning on the event that $S_t = s$, the only two random quantities in the definition of $\d_t$ are $R_{t+1}$ and $S_{t+1}$. Thus, because $s$ is Markov, we have that 
\begin{equation*}
p_{\pi}(\delta_t | S_t, R_{t}, S_{t-1}, R_{t-1}, ...) = p_{\pi}(\delta_t | S_t),  \tag{Property~\ref{prop:markov_state_policy}}
\end{equation*}
that is, $S_{t}$ fully defines the distribution of $\d_t$. This means that $p_{\pi}(\delta_t | S_t, X_{t'}) = p_{\pi}(\delta_t | S_t)$ for any $t' \le t$, where $X_{t'}$ is a random variable that only depends on events that occurred up to time $t'$. Replacing $X_{t'}$ with $\e_t$, we have that $p_{\pi}(\d_t | S_t, \e_t) = p_{\pi}(\d_t | S_t)$, which implies that $\d_t$ and $\e_t$ are independent conditioned on $S_t$.
\end{proof}

\subsection{Proof of Proposition~\ref{prop:sample_mean_and_variance} }
\label{sec:proof_sample_and_variance}
\PropRunningMean*
\begin{proof}
We have
\begin{align*}
    \E{ \alpha_t \d_t \e_t \mid S_t=s}
    & = \E{ \alpha_t \d_t \mid S_t=s } \E{ \e_t \mid S_t=s } \tag{as $s$ is Markov}\\
    & = \E{ \alpha_t \d_t \mid S_t=s } \E{ \z_t \mid S_t=s } \tag{as $\z_t = \frac{1}{n} \sum_{i}^{n} \e_{t_i^s} $} \\
    & = \E{ \alpha_t \d_t \z_t \mid S_t=s }. \hspace{20em}\qedsymbol
\end{align*}

Now let us look at the conditional variance for each of the dimension of the update vector $\alpha_t \d_t \z_t$: $\mathbb{V}[\alpha_t \d_t \z_{t, i} \mid S_t=s  ]$, where $\z_{t, i}$ denotes the $i$-th component of vector $\z_{t}$.
\begin{align*}
        & \mathbb{V}[\alpha_t \d_t \z_{t, i} \mid S_t=s  ] \\
        &= \E{(\alpha_t \d_t \z_{t, i} )^2\mid S_t=s} - \E{\alpha_t \d_t \z_{t, i} \mid S_t=s}^2 \\
        &= \E{\alpha_t^2 \d_t^2 (\z_{t, i} )^2\mid S_t=s} - \E{\alpha_t \d_t\mid S_t=s}^2 \E{\z_{t, i} \mid S_t=s}^2 \\
        &= \E{\alpha_t^2 \d_t^2 \mid S_t=s} \E{ (\z_{t, i} )^2\mid S_t=s} - \E{\alpha_t \d_t\mid S_t=s}^2 \E{\z_{t, i} \mid S_t=s}^2
\end{align*}
By a similar argument, we have
\begin{align*}
        & \mathbb{V}[\alpha_t \d_t \e_{t, i} \mid S_t=s  ]  \\
        &= \E{\alpha_t^2 \d_t^2 \mid S_t=s} \E{ (\e_{t, i} )^2\mid S_t=s} - \E{\alpha_t \d_t\mid S_t=s}^2 \E{\e_{t, i} \mid S_t=s}^2
\end{align*}
Now, we also know that $\E{\z_t \mid S_t=s} = \E{\e_t \mid S_t=s} = \mu_t$, as $\z_t$ is the empirical mean of $\e_t$. Thus we also have, component-wise,
\[
\E{\z_{t,i} \mid S_t=s} = \E{\e_{t,i} \mid S_t=s} = \mu_{t,i}
\]
Moreover, from the same reason we have that $\mathbb{V}(\z_{t,i}|S_t=s) = \frac{1}{n_s} \mathbb{V}(\e_{t,i}|S_t=s)$. Thus we obtain:
\begin{eqnarray*}
        \mathbb{V}[\alpha_t \d_t \z_{t,i} \mid S_t=s  ] 
        &=& \E{\alpha_t^2 \d_t^2 \mid S_t=s} \E{ \z_{t,i} (\z_{t,i} )^T\mid S_t=s} - \E{\alpha_t \d_t\mid S_t=s}^2 \mu_{t,i}^2
\end{eqnarray*}
Thus:
\begin{align*}
        &\mathbb{V}[\alpha_t \d_t \z_{t,i} \mid S_t=s  ] - \mathbb{V}[\alpha_t \d_t \e_{t,i} \mid S_t=s  ] \\
        &=  \E{\alpha_t^2 \d_t^2 \mid S_t=s} \underbrace{\left(\E{ \z_{t, i} (\z_{t, i} )^T\mid S_t=s} - \E{ \e_{t, i} (\e_{t, i} )^T\mid S_t=s} \right)}_{\leq 0, \text{ from definition of $\z_{t, i}$}}
        ~~\leq~~  0 \,,
\end{align*}
with equality holding, if and only if:
\begin{enumerate}[i]
    \item $\E{ (\z_{t, i} )^2\mid S_t=s} = \E{  (\e_{t, i} )^2\mid S_t=s} \Rightarrow \mathbb{V}(\z_{t,i}|S_t=s) = \mathbb{V}(\e_{t,i}|S_t=s)$, but $\mathbb{V}(\z_{t,i}|S_t=s) = \frac{1}{n_s} \mathbb{V}(\e_{t,i}|S_t=s)$ by definition of $\z_{t,i}$ as the running mean on samples $\e_{t,i}$. This can only happen for $n_s=1$, or in the absence of stochasticity, for every state $s$. Thus, in the most general case, this implies $\mathbb{V}(\z_{t,i}|S_t=s)=\mathbb{V}(\e_{t,i}|S_t=s) = 0$; or
    \item $\E{\alpha_t^2 \d_t^2 \mid S_t=s} = 0 \Rightarrow \d_t = 0$
\end{enumerate}
Thus, we have equality only with we have exactly one sample for the average $\z_t$ so far, or only one sample is needed (thus $\z_t$ and $\e_t$ are not actual random variables and there is only one deterministic path to $s$); or when the TD errors are zero for all transitions following $s$.
 \end{proof}

 \subsection{Properties of mixture traces}
 \label{sec:properties_of_mixture_traces}
 In this section we explore and proof some of the properties of the proposed mixture trace, defined in Equation \eqref{eq:mixture} in the main text and repeated here:
 \begin{equation}
\ee_t = (1 - \eta) \z_{\th}(S_t) + \eta \big( \g_t \l_t \ee_{t-1} + \nabla_{\w} v_{\w}(S_t) \big) \,. \tag{\ref{eq:mixture}}
\end{equation}
The proofs, in this section we will use the notation $\x_{t}$ to denote the features used in a linear approximation for the value function(s) constructed. Just note that this term can be substituted, in general, by the gradient term $\nabla_{\w} v_{\w}(S_t)$ in the equation above.

\PropMixtureTraceAsTD*

\begin{proof}
As mentioned before, under a linear parameterization $ \nabla_{\w} v_{\w}(S_t) = \x(S_t) := \x_{t}$
Let us start with the definition of the mixture trace $\ee_t$:
\begin{align*}
  \ee_t &= (1 - \eta) \z_t + \eta (\g_t \l_t \ee_{t-1} + \x_t)  \notag\\
  &= \left[ (1 - \eta) \z_t + \eta \x_t \right] + \eta \g_t \l_t \ee_{t-1}  \notag\\
  &= \left[ (1 - \eta) \z_t + \eta \x_t \right] + \eta \g_t \l_t  \left[ (1 - \eta) \z_{t-1} + \eta \x_{t-1} \right]
  + \eta^2 \g_t \l_t \g_{t-1} \l_{t-1} \ee_{t-2}  \notag\\
  &= (1 - \eta) \left[ \z_t + \eta \g_t \l_t \z_{t-1 } + \eta^2 \g_t \l_t \g_{t-1} \l_{t-1}  \z_{t-2} + \cdots \right]+  \notag\\
  &\phantom{=} + \eta \left[ \x_t + \eta \g_t \l_t \x_{t-1 } + \eta^2 \g_t \l_t \g_{t-1} \l_{t-1}  \x_{t-2} + \cdots \right]  \notag\\
    &= (1 - \eta) \sum_{k=0}^t (\eta \g \l)^k  \z_{t-k} + \eta \sum_{k=0}^t (\eta \g \l)^k  \x_{t-k} \label{eq:eta_lambda_convex_combination_interpretation_appendix}\\
    &= \sum_{k=0}^t (\eta \g \l)^k  \left[ (1 - \eta) \z_{t-k} + \eta \x_{t-k} \right] \notag
\end{align*}
Substituting $\x_{t}$ in the above derivation by $\nabla_{\w} v_{\w}(S_t)$ leads to \eqref{eq:mixture}.
\end{proof}

\PropETFixedPoint*
 \label{sec:linear_ET_fixed_point}
 \begin{proof}
By Proposition \ref{prop:mixture_trace_rewrite_as_TD_trace} we have that $\ee_t$ can be re-written as:
\begin{align}
\ee_t &= \sum_{k=0}^t (\eta \g \l)^k  \left[ (1 - \eta) \z_{\th}(S_{t-k}) + \eta \x(S_{t-k}) \right] \notag\\
 &= \sum_{k=0}^t (\eta \g \l)^k  \left[ (1 - \eta) \X \x(S_{t-k}) + \eta \x(S_{t-k}) \right] \\
 &= \left[ (1 - \eta) \X + \eta \mathbb{I} \right] \underbrace{\sum_{k=0}^t (\eta \g \l)^k  \x(S_{t-k})}_{\text{instantaneous trace $\e^{\lambda \eta}_t$}} \,. \label{eq:e_eta_theta}
\end{align}
We examine the fixed point $\w_*$ of the algorithm using this approximation of the expected trace:
\begin{align*}
\E{\delta_t \ee_t}
& = \E{\ee_t (R_{t+1} + \gamma \x(S_{t+1})\tr \w_* - \x(S_t)\tr \w_*)} \\
& = \mathbf{0} \,.
\end{align*}
This implies the fixed point is
\[
\w_* = \E{\ee_t (\gamma \x(S_{t+1}) - \x(S_t))\tr}^{-1} \E{\ee_t R_{t+1}} \,.
\]
Now, plugging in the relation in \eqref{eq:e_eta_theta} above, we get:
\begin{align*}
\w_*
& = \E{ \left[ (1 - \eta) \X + \eta \mathbb{I} \right] \e^{\lambda \eta}_t (\gamma \x(S_{t+1}) - \x(S_t))\tr}^{-1} \E{\left[ (1 - \eta) \X + \eta \mathbb{I} \right] \e^{\lambda \eta}_t R_{t+1}} \\
& = \E{\e^{\lambda \eta}_t (\gamma \x(S_{t+1}) - \x(S_t))\tr}^{-1} \left[ (1 - \eta) \X + \eta \mathbb{I} \right]^{-1} \left[ (1 - \eta) \X + \eta \mathbb{I} \right] \E{\e^{\lambda \eta}_t R_{t+1}} \\
& = \E{\e^{\lambda \eta}_t (\gamma \x(S_{t+1}) - \x(S_t))\tr}^{-1} \E{\e^{\lambda\eta}_t R_{t+1}} \,.
\end{align*}
This last term is the fixed point for TD($\lambda \eta$).
\end{proof}

Moreover, it is worth noting that the above equality recovers, for the extreme values of $\eta$:
\begin{itemize}
    \item $\eta=1 \Rightarrow \ee_t = \sum_{k=0}^t (\g \l)^k  \x_{t-k}$ (instantaneous trace for TD($\lambda$))
    \item $\eta=0 \Rightarrow \ee_t = \sum_{k=0}^t (\eta \g \l)^k  \z_{t-k} = \z_t$ (expected trace for TD($\lambda$))
\end{itemize}

Moreover, as the extreme values already suggest, the expected update of the mixture traces follows the TD($\lambda$) learning, in expectation, for all the intermediate values $\eta \in (0,1)$ as well, trading off variance of estimates as $\eta$ approaches $0$.

\begin{proposition}
Let $\e_t^{\lambda}$ be a $\lambda$ trace vector.  Let $\ee_t = (1 - \eta) \z_t + \eta (\g \l \ee_{t-1} + \x_t)$ (as defined in \eqref{eq:mixture}).  Consider the ET($\lambda, \eta$) algorithm $\w_{t+1} = \w_t + \alpha_t \d_t \ee_t$. For all Markov states $s$ the expectation of this update is equal to the expected update with instantaneous traces $\e_t^{\lambda}$:
\begin{align*}
    \E{ \alpha_t \d_t \ee(S_t) | S_t=s } & = \E{ \alpha_t \d_t \e_t^{\lambda} | S_t=s} \,,
\end{align*}
for every $\eta \in [0,1]$ and any $\lambda \in [0,1]$.
\end{proposition}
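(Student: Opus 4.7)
I would prove this in two stages: (i) reduce the claim to the scalar identity $\E{\ee_t \mid S_t=s} = \E{\e_t^{\lambda} \mid S_t=s}$ using an extension of Lemma~\ref{lmi}; (ii) verify that identity by expanding $\ee_t$ with Proposition~\ref{prop:mixture_trace_rewrite_as_TD_trace} and rearranging a double geometric sum.

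For step (i), both $\ee_t$ and $\e_t^{\lambda}$ are measurable functions of the past trajectory $S_0,A_0,R_1,\ldots,S_t$. The argument in the proof of Lemma~\ref{lmi} shows that when $S_t$ is Markov, $\d_t$ (which depends only on $R_{t+1},S_{t+1}$ given $S_t$) is conditionally independent of any such past-measurable quantity. Hence $\E{\alpha_t\d_t\ee_t\mid S_t=s}=\E{\alpha_t\d_t\mid S_t=s}\E{\ee_t\mid S_t=s}$, and the analogous identity holds for $\e_t^{\lambda}$, so the two expected updates agree if and only if the two conditional expected traces do.

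For step (ii), I would use Proposition~\ref{prop:mixture_trace_rewrite_as_TD_trace} to write $\ee_t = \sum_{k=0}^{t}(\eta\g\l)^k[(1-\eta)\z(S_{t-k})+\eta\,\x(S_{t-k})]$. The key substep is to express $\E{\z(S_{t-k})\mid S_t=s}$ in terms of feature means alone. Since $\z(s') = \E{\e_{t-k}^{\lambda}\mid S_{t-k}=s'}$ by definition, and the Markov property makes the past trace $\e_{t-k}^{\lambda}$ conditionally independent of the future $S_t$ given $S_{t-k}$, the tower law gives $\E{\z(S_{t-k})\mid S_t=s}=\E{\e_{t-k}^{\lambda}\mid S_t=s}=\sum_{j=0}^{t-k}(\g\l)^j f(k+j)$, where $f(i)\equiv\E{\x(S_{t-i})\mid S_t=s}$. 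Substituting, swapping the summation order via $i=k+j$, and applying the geometric identity $\sum_{k=0}^{i}\eta^k=(1-\eta^{i+1})/(1-\eta)$ gives $\sum_{i=0}^{t}(\g\l)^i f(i)(1-\eta^{i+1})$ for the $(1-\eta)\z$ contribution, while the $\eta\,\x$ contribution yields $\sum_{i=0}^{t}\eta^{i+1}(\g\l)^i f(i)$. These combine to $\sum_{i=0}^{t}(\g\l)^i f(i)$, which is exactly $\E{\e_t^{\lambda}\mid S_t=s}$.

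The main obstacle is the tower/Markov argument justifying $\E{\z(S_{t-k})\mid S_t=s}=\E{\e_{t-k}^{\lambda}\mid S_t=s}$: one must show carefully that conditioning on the \emph{future} value $S_t=s$ does not disturb the conditional expectation of the \emph{past} trace $\e_{t-k}^{\lambda}$ once $S_{t-k}$ is known. The remainder is routine bookkeeping---an index swap followed by a clean cancellation between the $(1-\eta^{i+1})$ and $\eta^{i+1}$ factors---worth spelling out so that the telescoping is transparent. Note that the argument goes through for every $\eta\in[0,1]$ and every $\l\in[0,1]$, covering the full stated range (including the endpoints, where $\ee_t$ degenerates to $\e_t^{\lambda}$ or $\z(S_t)$ as already noted in Proposition~\ref{prop:mixture_trace_rewrite_as_TD_trace}).
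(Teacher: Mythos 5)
Your proposal is correct and follows essentially the same route as the paper's proof: expand $\ee_t$ via Proposition~\ref{prop:mixture_trace_rewrite_as_TD_trace}, reduce the $\z$ terms to discounted sums of past feature expectations, collapse the double geometric sum so the $\eta$-dependence cancels, and factor out $\d_t$ by the Markov argument of Lemma~\ref{lmi}. The only cosmetic difference is that you use the closed-form sum for $\z$ plus an index swap where the paper unrolls the recursion $\z_{t-k}=\E{\x_{t-k}+\g\l\z_{t-k-1}}$ term by term, and you are somewhat more explicit about the tower/reverse-Markov step justifying $\E{\z(S_{t-k})\mid S_t=s}=\E{\e^{\lambda}_{t-k}\mid S_t=s}$, which the paper glosses over by working with unconditional expectations.
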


\begin{proof}
{\small
Let us revisit Eq.~\ref{eq:mixture_trace_rewrite_as_TD_trace} in Proposition \ref{prop:mixture_trace_rewrite_as_TD_trace}:
\begin{eqnarray*}
   \E{\ee_t}
    &=& \E{\sum_{k=0}^t (\eta \g \l)^k  \left[ (1 - \eta) \z_{t-k} + \eta \x_{t-k} \right]} \\
    &=& \E{\sum_{k=0}^t (\eta \g \l)^k  \left[ (1 - \eta) \E{\left( \x_{t-k} + \g \l \z_{t-k-1} \right)} + \eta \x_{t-k} \right]} \\
    &=& \E{\sum_{k=0}^t (\eta \g \l)^k  \x_{t-k} + (1 - \eta) \g \l \sum_{k=0}^{t-1} (\eta \g \l)^k \underbrace{\z_{t-k-1}}_{\E{(\x_{t-k-1} + \g \l \z_{t-k-2})}}}  \\
    &=& \E{\sum_{k=0}^t (\eta \g \l)^k  \x_{t-k} + (1 - \eta) \g \l \sum_{k=0}^{t-1} (\eta \g \l)^k \x_{t-k-1} + (1 - \eta) (\g \l)^2 \sum_{k=0}^{t-2} (\eta \g \l)^k \underbrace{\z_{t-k-2}}_{\E{\x_{t-k-2} + \g \l \z_{t-k-3}}}} \\
    &=& \E{\sum_{k=0}^t (\eta \g \l)^k  \x_{t-k} + (1 - \eta) \sum_{i=1}^{t-1}{(\g \l)}^i \sum_{k=0}^{t-i} (\eta \g \l)^k \x_{t-k-i}}
\end{eqnarray*}
Now, re-writing the sum, gathering all the weighting for each feature $\x_{t-k-i}$ we get:
\begin{eqnarray*}
   \E{\ee_t}
    &=& \E{\sum_{k=0}^t (\eta \g \l)^k  \x_{t-k} + (1 - \eta) \sum_{i=1}^{t-1}{(\g \l)}^i \sum_{k=0}^{t-i} (\eta \g \l)^k \x_{t-k-i}} \\
    &=& \E{\x_{t} + \sum_{k=1}^t \x_{t-k} \left((\eta \g \l)^k + (1 - \eta) \sum_{i=1}^{k}{(\g \l)^{i} \cdot(\g \l \eta)}^{k-i}\right)} \\
    &=& \E{\x_{t} + \sum_{k=1}^t \x_{t-k} (\g \l)^k \left(\eta^k + (1 - \eta) \sum_{i=1}^{k}{ \eta^{k-i}}\right)} \\
    &=& \E{\x_{t} + \sum_{k=1}^t \x_{t-k} (\g \l)^k \left(\eta^k + (1 - \eta) \frac{1-\eta^{k}}{(1-\eta)}\right)} \\
    &=& \E{\x_{t} + \sum_{k=1}^t \x_{t-k} (\g \l)^k} \\
    &=& \E{\sum_{k=0}^t (\g \l)^k \x_{t-k}}     
\end{eqnarray*}

Thus $\E{\ee_t} = \E{\sum_{k=0}^t (\g \l)^k \x_{t-k}} = \E{\e^{\lambda}_t }$, where $\e^{\lambda}_t$ is the instantaneous $\lambda$ trace on feature space $\x$. Thus $\E{\ee(s)} = z_{*}^{\lambda}(s) = \E{\e^{\lambda}_t }$.
Finally we can plug-in this result in the expected update:
\begin{align*}
    \E{ \alpha_t \d_t \ee(S_t) | S_t=s } & = \E{ \alpha_t \d_t | S_t=s} \E{ \ee(S_t) | S_t=s }  \\
    & = \E{ \alpha_t \d_t | S_t=s} z_{*}^{\lambda}(s) \\
    & = \E{ \alpha_t \d_t | S_t=s} \E{\e^{\lambda}_t | S_t=s } \\
    & = \E{ \alpha_t \d_t \e^{\lambda}_t | S_t=s }.
\end{align*}
}
\end{proof}
Finally, please note that in this proposition and its proof we drop the time indices $t$ for $\lambda$ and $\gamma$ parameters in the definition of $\ee_t$. This is purely to ease the notation and promote compactness in the derivation

\subsection{Parameter study}
Figure \ref{tabular_step_size} contains a parameter study in which we compare the performance of TD($\lambda$) and ET($\lambda$) across different step sizes.  The data used for this figure is the same as used to generate the plots in Figure \ref{tabular}, but now we look explicitly at the effect of the step size parameter.

\begin{figure}
    \centering
    \includegraphics[width=0.5\linewidth]{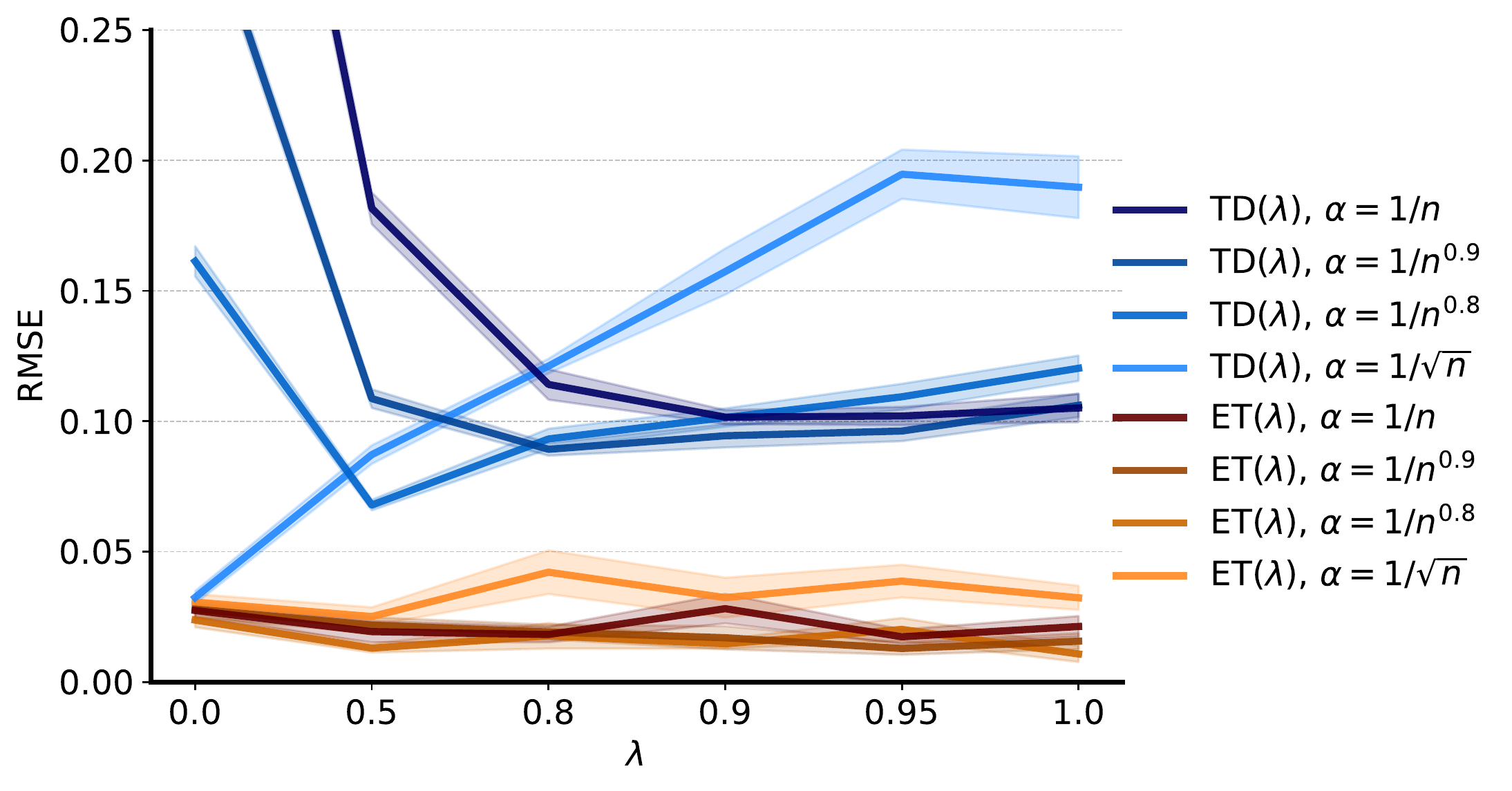}
    \caption{Comparison of prediction errors (lower is better) of TD($\lambda$) and ET($\lambda$) across different $\lambda$s and different step sizes in the multi-chain world \ref{multi_chain}. The data underpinning these plots is the same as the data used for Figure \ref{tabular}, with 32 parallel chains. In all cases the step size was $\alpha=1/n_t(S_t)^d$, where $n_t(s) = \sum_{i=0}^t I(S_i=s)$ is the number of visits to state $s$ in the first $t$ time steps, and where $d$ is a hyper-parameter.  Note that the step size is \emph{lower} when the exponent is \emph{higher}. We see that TD(0) performed best with a high step size, and that for high $\lambda$ lower step sizes performed better---TD(0) with the highest step size ($\alpha_t=1/\sqrt{n_t(S_t)}$) and TD(1) with the lowest step size ($\alpha_t=1/n_t(S_t)$) both performed poorly.  In contrast, ET($\lambda$) here performed well for any combination of step size and trace parameter $\lambda$.}
    \label{tabular_step_size}
\end{figure}

\subsection{Experiment details for Atari experiments}
\label{appendix:deep}
For our deep reinforcement learning experiments on Atari games, we compare to an implementation of online Q($\lambda$).  We first describe this algorithm, and then describe the expected-trace variant. All the Atari experiments were run with the ALE \citep{Bellemare:2013}, exactly as described in \citet{Mnih:2015}, including using action repeats (4x), downsampling (to $84 \times 84$), and frame stacking.  These experiments were conducted using Jax \citep{Jax:2018}.

In all cases, we used $\epsilon$-greedy exploration \citep[cf.][]{SuttonBarto:2018}, with an $\epsilon$ that quickly decayed from $1$ to $0.01$ according to $\epsilon_0 = 1$ and $\epsilon_t = \epsilon_{t-1} + 0.01(0.01 - \epsilon_{t-1})$. Unlike \citet{Mnih:2015}, we did not clip rewards, and we also did not apply any target normalisation \citep[cf.][]{vanHasselt:2016popart} or non-linear value transformations \citep{Pohlen:2018,vanHasselt:2019nonlinear}.  We conjecture that such extensions could be beneficial for performance, but they are orthogonal to the main research questions investigated here and are therefore left for future work.

\begin{algorithm}
    \caption{Q($\lambda$)}\label{alg:qlambda}
    \begin{algorithmic}[1]
        \State \text{initialise $\w$}
        \State \text{initialise $\e = \bm 0$}
        \State \text{observe initial state $S$}
        \State \text{pick action $A \sim \pi(q_{\w}(S))$}
        \State $v \gets \max_a q_{\w}(S, a)$ \label{q_lambda:td}
        \State $\g = 0$
        \Repeat{}
        \State \text{take action $A$, observe $R$, $\g'$ and $S'$} \hfill \# $\g' = 0$ on a terminating transition
        \State $v' \gets \max_a q_{\w}(S', a)$
        \State $\d \gets R + \g v' - v$
        \State $\e \gets \gamma \lambda \e + \nabla_{\w} q_{\w}(S, A)$ \label{qlambda:e}
        \State $\Delta\w \gets \delta \e + (v - q_{\w}(S, A)) \nabla_{\w} q_{\w}(S, A)$
        \State $\Delta\w \gets \text{transform}(\Delta \w)$ \hfill \# e.g., ADAM-ify \label{adamify}
        \State $\w \gets \w + \Delta\w$ \label{qlambda:update}
        \Until{done}
    \end{algorithmic}
\end{algorithm}

\subsubsection{Deep Q($\lambda$)}
We assume the typical setting \citep[e.g.,][]{Mnih:2015} where we have a neural network $\q_{\w}$ that outputs $|A|$ numbers, such that $q(s, a) = q_{\w}(s)[a]$.  That is, we forward the observation $s$ through network $q_{\w}$ with weights $\w$ and $|A|$ outputs, and then select the $a^{\text{th}}$ output to represent the value of taking action $a$.

Algorithm \ref{alg:qlambda} then works as follows.  For each transition, we first compute a telescoping TD error $\delta = r + \gamma' v' - v$ (line \ref{q_lambda:td}), where $\gamma' = 0$ on termination (and then $S'$ is the first observation of the next episode) and, in our experiments, $\gamma' = 0.995$ otherwise. We update the trace $\e$ as usual (line \ref{qlambda:e}), using accumulating traces.  Note that the weights and, hence, trace will also have elements corresponding to the weights of actions that were not selected.  The gradient with respect to those elements is considered to be zero, as is conventional.

Then, we compute a weight update $\Delta\w = \delta \e + (v - q_{\w}(S, A)) \nabla_{\w} q_{\w}(S, A)$.  The additional term corrects for the fact that our TD error is a telescoping error, and does not have the usual `${} - q(s, a)$' term.  This is akin to the Q($\lambda$) algorithm proposed by \citet{Peng:1996}.

Finally, we transform the resulting update, using a transformation exactly like ADAM \citep{Kingma:2015}, but applied to the update $\Delta\w$ rather than a gradient.  The hyper-parameters were $\beta_1 = 0.9$, $\beta_2 = 0.999$, and $\epsilon=0.0001$, and one of the step sizes as given in Figure \ref{deep_et}. We then apply the resulting transformed update by adding it to the weights (line \ref{qlambda:update}).

For the Atari experiments, we used the same preprocessing and network architecture as \citet{Mnih:2015}, except that we used 128 channels in each convolutional layer because we ran experiments on TPUs (version 3.0, using a single core per experiment) which are most efficient when using tensors where one dimension is a multiple of 128.  The experiments were written using JAX \citep{jax2018github} and Haiku \citep{haiku2020github}.

\subsubsection{Deep QET($\lambda$)}
We now describe the expected-trace algorithm, shown in Algorithm \ref{alg:qet}, which was used for the Atari experiments.  It is very similar to the Q($\lambda$) algorithm described above, and in fact equivalent when we set $\eta=1$.

The first main change is that we will split the computation of $q(s, a)$ into two separate parts, such that $q_{(\w, \bm{\xi})}(s, a) = \w_a\tr \x_{\bm{\xi}}(s)$.  This is equivalent to the previous algorithm: we have just labeled separate subsets of parameters as $(\w, \bm{\xi})$ rather than merging all of them into a single vector $\w$, and we have labeled the last hidden layer as $\x(s)$.  We keep separate traces for these subset (lines \ref{head_trace} and \ref{x_trace}), but this is equivalent to keeping one big trace for the combined set.
\begin{algorithm}
    \caption{QET($\lambda$)}\label{alg:qet}
    \begin{algorithmic}[1]
        \State \text{initialise $\w$, $\bm{\xi}$, $\th$}
        \State \text{initialise $\e = \bm 0$, $\y = \bm 0$}
        \State \text{observe initial state $S$}
        \State \text{pick action $A \sim \pi(q(S))$}
        \State $v \gets \max_a q_{(\w, \bm{\xi})}(S', a)$ \hfill \# $q_{(\w, \bm{\xi})}(s, a) = \w_a\tr \x_{\bm{\xi}}(s)$, where $\w = (\w_1, \ldots, \w_{|A|})$
        \State $\g = 0$
        \Repeat{}
        \State \text{take action $A$, observe $R$, $\g'$ and $S'$} \hfill \# $\g' = 0$ on any terminating transition
        \State $v' \gets \max_a q_{(\w, \bm{\xi})}(S', a)$
        \State $\d \gets R + \g v' - v$
        \State $\e^{\w} \gets \gamma \lambda \ee + \nabla_{\w} q_{(\w, \bm{\xi})}(S, A)$ \label{head_trace}
        \State $\e^{\bm{\xi}} \gets \gamma \lambda \e^{\bm{\xi}} + \nabla_{\bm{\xi}} q_{(\w, \bm{\xi})}(S, A)$ \label{x_trace}
        \State $\Delta\w \gets \delta \e^{\w} + (v - q_{(\w, \bm{\xi})}(S, A)) \nabla_{\w} q_{(\w, \bm{\xi})}(S, A)$
        \State $\Delta\bm{\xi} \gets \delta \e^{\bm{\xi}} + (v - q_{(\w, \bm{\xi})}(S, A)) \nabla_{\bm{\xi}} q_{(\w, \bm{\xi})}(S, A)$
        \State $\Delta\th \gets \nabla_{\th} \| \e^{\bm{\xi}} - \z_{\th}(S, A) \|^2_2$
        \State $\Delta\w \gets \text{transform}(\Delta\w)$ \hfill \# e.g., ADAM-ify
        \State $\Delta\bm{\xi} \gets \text{transform}(\Delta\bm{\xi})$
        \State $\Delta\th \gets \text{transform}(\Delta\th)$
        \State $\w \gets \w + \Delta\w$
        \State $\bm{\xi} \gets \bm{\xi} + \Delta\bm{\xi}$
        \State $\th \gets \th + \Delta\th$
        \State $\ee = (1 - \eta) \z_{\th}(s, a) + \eta \e^{\w}$
        \Until{done}
    \end{algorithmic}
\end{algorithm}

This split in parameters helps avoid learning an expected trace for the full trace, which has millions of elements.  Instead, we only learn expectations for traces corresponding to the last layer, denoted $\e^\w$.  Importantly, the function $\z_{\th}(s, a)$ should condition on both state and action.  This was implemented as a tensor $\th \in \reals^{|A| \times |A| \times |\x|}$, such that its tensor multiplication with the features $\x(s)$ yields a $|A| \times |A|$ matrix $\bm{Z}$.  Then, we interpret the vector $\z_a = [\bm{Z}]_a$ as the approximation to the expected trace $\E{ \e_t \mid S_t=s, A_t=a}$, and update it accordingly, using a squared loss (and, again, ADAM-ifying the update before applying it to the parameters).  The step size for the expected trace update was always $\beta=0.1$ in our experiments, and the expected trace loss was not back-propagated into the feature representation.  This can be done, but we leave any investigation of this for future work, as it would present a conflating factor for our experiments, because the expected trace update would then serve as an additional learning signal for the features that are also used for the value approximations.
\end{document}